\documentclass[draft]{eptcs}
\usepackage{underscore}   

\usepackage[utf8]{inputenc}
\usepackage{amsmath,amssymb,graphicx}
\usepackage{url}
\usepackage{prooftrees}
\renewcommand{\phi}{\varphi}
\renewcommand{\epsilon}{\varepsilon}
\newcommand{\M}{\mathcal{M}}
\newcommand{\N}{\mathcal{N}}
\newcommand{\T}{\mathcal{T}}
\newcommand{\END}{\tt{END}}

\newcommand{\lra}{\leftrightarrow}

\newcommand{\BR}{\tt{BR}}
\newcommand{\K}{\ensuremath{\K}}
\newcommand{\Kh}{\ensuremath{\mathit{Kh}}}

\newcommand{\Kv}{\ensuremath{\mathit{Kv}}}

\newcommand{\hK}{\ensuremath{\widehat{\K}}}
\renewcommand{\hK}{\ensuremath{\Diamond}}
\newcommand{\hKS}{\ensuremath{\hK}^x}
\newcommand{\hKSy}{\ensuremath{\hK}^y}
\newcommand{\FOML}{\ensuremath{\mathbf{FOML}^\approx}}
\newcommand{\SFOL}{\ensuremath{\mathbf{2SFOL}^\approx}}
\newcommand{\FOL}{\ensuremath{\mathbf{FOL}^\approx}}
\newcommand{\PSPACE}{\textsc{PSPACE}}

\newcommand{\KA}{\ensuremath{\blacksquare^{x}}}

\newcommand{\KS}{\ensuremath{\K^{x}}}
\newcommand{\KSy}{\ensuremath{\K^{y}}}
\newcommand{\KSxf}{\ensuremath{\K^{x_1}}}
\newcommand{\KSxl}{\ensuremath{\K^{x_n}}}

\newcommand{\hKSyf}{\ensuremath{\hK^{y_1}}}
\newcommand{\hKSyl}{\ensuremath{\hK^{y_m}}}

\newcommand{\SMLMSK}{\ensuremath{\mathsf{SMLMSK}}}
\newcommand{\SMLMSKEQ}{\ensuremath{\mathsf{SMLMSK}^\approx}}
\newcommand{\MLMS}{\ensuremath{\mathbf{MLMS}}}

\newcommand{\MLMSK}{\ensuremath{\mathbf{MLMSK}}}
\newcommand{\MLMSKEQ}{\ensuremath{\mathbf{MLMSK}^\approx}}

\newcommand{\MLMSEQ}{\ensuremath{\mathbf{MLMS}^{\approx}}}

\newcommand{\lr}[1]{\langle #1 \rangle}

\newcommand{\Ps}{\textbf{P}}
\newcommand{\Qs}{\textbf{Q}}
\newcommand{\Var}{\textbf{X}}
\newcommand{\War}{\textbf{Y}}

\newcommand{\ef}{\ensuremath{\exists\Box}}

\newcommand{\TAUT}{\ensuremath{\mathtt{TAUT}}}
\newcommand{\NECK}{\ensuremath{\mathtt{NECK}}}
\newcommand{\NECMS}{\ensuremath{\mathtt{NECMS}}}

\newcommand{\DISTK}{\ensuremath{\mathtt{DISTK}}}
\newcommand{\AxTrK}{\ensuremath{\mathtt{T}}}
\newcommand{\AxTrans}{\ensuremath{\mathtt{4MS}}}
\newcommand{\AxEuc}{\ensuremath{\mathtt{5MS}}}
\newcommand{\AxTransK}{\ensuremath{\mathtt{4}}}
\newcommand{\AxEucK}{\ensuremath{\mathtt{5}}}

\newcommand{\MP}{\ensuremath{\mathtt{MP}}}
\newcommand{\MONO}{\ensuremath{\mathtt{MONOMS}}}
\newcommand{\RE}{\ensuremath{\mathtt{RE}}}
\newcommand{\AxKtoMS}{\ensuremath{\mathtt{KtoMS}}}
\newcommand{\AxMStoMSK}{\ensuremath{\mathtt{MStoMSK}}}
\newcommand{\AxMSKtoMS}{\ensuremath{\mathtt{MSKtoMS}}}
\newcommand{\AxMStoK}{\ensuremath{\mathtt{MStoK}}}

\newcommand{\KtoMS}{\ensuremath{\mathtt{RKtoMS}}}

\newcommand{\AxId}{\ensuremath{\mathtt{ID}}}
\newcommand{\AxSym}{\ensuremath{\mathtt{SYM}}}
\newcommand{\AxTranseq}{\ensuremath{\mathtt{TRANS}}}
\newcommand{\AxKEQ}{\ensuremath{\mathtt{KEQ}}}
\newcommand{\AxKNEQ}{\ensuremath{\mathtt{KNEQ}}}
\newcommand{\AxMStotK}{\ensuremath{\mathtt{MStotK}}}

\newcommand{\AxSub}{\ensuremath{\mathtt{SUBID}}}

\newcommand{\AxRMS}{\ensuremath{\mathtt{RMS}}}
\newcommand{\AxKT}{\ensuremath{\mathtt{KT}}}
\newcommand{\AxMST}{\ensuremath{\mathtt{MST}}}

\renewcommand{\iff}{\Leftrightarrow}

\renewcommand{\K}{\Box}
\renewcommand{\vec}{\overline}

\newcommand{\bis}{\mathrel{\mathchoice%
{\raisebox{.3ex}{$\,
  \underline{\makebox[.7em]{$\leftrightarrow$}}\,$}}%
{\raisebox{.3ex}{$\,
  \underline{\makebox[.7em]{$\leftrightarrow$}}\,$}}%
{\raisebox{.2ex}{$\,
  \underline{\makebox[.5em]{\scriptsize$\leftrightarrow$}}\,$}}%
{\raisebox{.2ex}{$\,
  \underline{\makebox[.5em]{\scriptsize$\leftrightarrow$}}\,$}}}}

\usepackage{color}
\usepackage[all]{xy}


\newtheorem{definition}{Definition}[section]
\newtheorem{proposition}{Proposition}[section]

\newtheorem{corollary}{Corollary}[section]
\newtheorem{theorem}{Theorem}[section]
\newtheorem{lemma}{Lemma}[section]
\newtheorem{example}{Example}

\newtheorem{remark}{Remark}
\newenvironment{proof}{\noindent{\bf Proof:}}{\hfill\rule{2mm}{2mm}\\}

\author{Yanjing Wang\institute{Department of Philosophy, Peking University, China}
\email{y.wang@pku.edu.cn}}

\title{A New Modal Framework for Epistemic Logic\thanks{The author thanks Pavel Naumov for insightful initial discussions on the language and semantics of this framework, and thanks Timothy Williamson for pointing out the categorization of know-wh by \textit{mention-all} and \textit{mention-some}. The author is also grateful to Yanjun Li and Jixin Liu for carefully reading an earlier draft of the paper.}}

\begin{document}

\maketitle

\begin{abstract}
Recent years witnessed a growing interest in non-standard epistemic logics of knowing whether, knowing how, knowing what, knowing why and so on. The new epistemic modalities introduced in those logics all share, in their semantics, the general schema of $\exists x \Box \phi$, e.g., knowing how to achieve $\phi$ roughly means that there exists a way such that you know that it is a way to ensure that $\phi$. Moreover, the resulting logics are  decidable. Inspired by those particular logics, in this work, we propose a very general and powerful framework based on quantifier-free predicate language extended by a new modality $\Box^x$, which packs exactly $\exists x \Box$ together. We show that the resulting language, though much more expressive, shares many good properties of the basic propositional modal logic over arbitrary models, such as finite-tree-model property and van Benthem-like characterization w.r.t.\ first-order modal logic. We axiomatize the logic over S5 frames with intuitive axioms to capture the interaction between $\Box^x$ and know-that operator in an epistemic setting.
\end{abstract} 

\section{Introduction}

Standard epistemic logic studies valid reasoning patterns about \textit{knowing that}. However, in natural language, knowledge is also expressed by \textit{knowing whether}, \textit{knowing how}, \textit{knowing what}, \textit{knowing why} and so on. Recent years witnessed a growing interest in the non-standard epistemic logics of such expressions (cf. e.g., \cite{PubPlazanew,WF13,WF14,GW16,FWvD14,FWvD15,
Wang15:lori,Wang17,LiWang17,WangBKT,GvEW16,
NaumovT17,LauWang,Baltag16} and the survey \cite{WangBKT}).\footnote{This is not meant to be an exhaustive list, e.g., see also \cite{JamrogaH04,HerzigT06,JamrogaA07} about logics of knowing how in the setting of ATL.} In this line of work, various new modalities of know-wh are introduced,\footnote{Know-wh means verb \textit{know} followed by a wh-word.} all of which share the general \textit{de re} schema $\exists x \K \phi(x)$ in their semantics, e.g, ``knowing how to achieve $\phi$'' roughly means that there \textit{exists} a way such that you \textit{know that} it is a way to ensure that $\phi$ \cite{Wang17}; ``knowing why $\phi$'' means that there \textit{exists} an explanation such that you \textit{know that} it is an explanation to the fact $\phi$ \cite{Xu16}. Actually, in the early days of epistemic logic, Hintikka already used such formulations to handle \textit{knowing who} \cite{Hintikka:kab} (cf.\ the survey \cite{WangBKT} for a detailed discussion on Hintikka's early contributions). Such interpretations are  grounded also in philosophy and linguistics (cf. e.g., \cite{stanley2001knowing,stanley2011know}). 

Such a semantic schema is based on the so-called \textit{mention-some} interpretation to the wh-questions embedded in those knowledge expressions \cite{hamblin1973questions}. There is also a \textit{mention-all} interpretation \cite{SG82}, which makes sense in many other situations, e.g., ``knowing who came to the party''  means, under an exhaustive reading, that for \textit{each} relevant person, you know whether (s)he came to the party or not, which can be summarized as $\forall x (\K\phi(x)\lor \K\neg \phi(x))$. There are degenerated cases when the two interpretations coincide, e.g., ``knowing [what] the value of $c$ [is]'' means, under the interpretation of mention-some, that there \textit{exists} a value such that you \textit{know that} it is the value of $c$, which is equivalent to the mention-all interpretation: for any value, you know whether it is the value of $c$, given there is \textit{one and only one} real value of $c$.  

Given the experiences in dealing with those particular cases of know-wh, it is the time to lay out a general background  framework for the shared ``logical core'' of those  logics.\footnote{Note that this does not imply that we can simply use a single framework to cover all of those particular cases, since the details of the semantics in each setting matter a lot in deciding the characteristic axioms and rules in each different setting. Moreover, for example, in the setting of \textit{knowing how} logics, we need to quantify over second-order objects (plans or strategies). } This paper is the initial step towards this purpose by extending the  predicate language with the \textit{mention-some} operator $\Box^x$, which is essentially a package of $\exists x \Box$. With the variable in place, we can say much more than those existing logics of know-wh, e.g., ``I know a theorem of which I do not know any  proof'' $\KS \neg \K^y Prove(y, x)$, or, in a multi-agent setting, ``$i$ knows a country which $j$ knows its capital'': $\Box^x_i\Box^y_j Capital(y, x)$. Actually, when $x$ does not appear in $\phi$, $\Box^x\phi$ is equivalent to $\Box\phi,$ thus our language is indeed an extension of the standard modal language. Moreover, we will show that in the epistemic setting, this mention-some operator can also express mention-all. By having the predicate symbols in the language, we can also talk about the \textit{content} of knowledge, which may be useful to bridge epistemic logic and knowledge representation. We also  believe that the new modality can be interpreted meaningfully not only in the epistemic context.  

From a technical point of view, what we discovered is a well-behaved yet powerful fragment of first-order modal logic (over arbitrary models), as the following main  technical contributions of this paper demonstrate: 
\begin{itemize}
\item We propose a novel notion of bisimulation which can characterize the expressive power of our language within first-order modal logic precisely, over arbitrary models. 
\item Over arbitrary models, satisfiability of the equality-free fragment is not only decidable but also \PSPACE-complete, just as the complexity of basic propositional modal logic, demonstrated by a tableau-like method to show some strong finite-tree-model property.
\item We give a sound and complete proof system to our logic over epistemic (S5) models with the equivalence relation. However, we show that over S5 models, our logic is undecidable. 
\end{itemize}
Due to historical and technical reasons, first-order modal logic (\textbf{FOML}), in particular in the epistemic setting, has not been  thoroughly studied as its propositional brother (cf.\ e.g., \cite{G07FML,gochet2006epistemic,WangBKT}). Decidable fragments of first-order modal logic are usually obtained by restricting the occurrences of variables (particularly in the scope of $\Box$) (cf.\ e.g., \cite{HodkinsonWZ00,HodkinsonWZ02,Hodkinson02a,BelardinelliL11}). We hope our framework and techniques can pave a new way to some interesting fragments of first-order modal logic, using new modalities to pack quantifiers and standard modalities together, which also reflects the ``secret of success'' of basic propositional modal logic as a nicely balanced logic between expressivity and complexity. 

In the rest of this paper, we introduce  the language and semantics of our framework in Section \ref{sec.ls}, study its expressivity over arbitrary models in Section \ref{sec.exp}, prove the complexity of the equality-free fragment in Section \ref{sec.dec}, give the proof systems in the epistemic setting in Section \ref{sec.el}, and prove their completeness and undecidability in Section \ref{sec.comp}.

\section{Syntax and Semantics} \label{sec.ls}

Throughout this paper we assume a fixed countably infinite set of variables $\Var$, and a fixed set of predicate symbols $\Ps$. Furthermore, we assume that each predicate symbol is associated with a unique non-negative integer called its \textit{arity}. We use $\vec{x}$ to denote a finite sequence of (distinct) variables in $\Var$, in the order of a fix enumeration of $\Var$. By abusing the notation, we also view $\vec{x}$ as a set of variables in $\vec{x}$ occasionally. In this paper, for the brevity of presentation, we focus on the following unimodal language, but the results and techniques can be generalized to the polymodal language.\footnote{We leave the discussion about the extension with constants or function symbols to the full version of the paper.} 



\begin{definition}[Language $\MLMS^{\approx}$] Given  $\Var$ and $\Ps$,   
$$ \phi::= x\approx y \mid P\vec{x} \mid \neg\phi \mid (\phi\land\phi)\mid \KS\phi $$
where $x, y\in \Var$, $P\in \Ps$. We call the  equality-free fragment $\MLMS$ (modal logic of \textit{mention-some}). 
\end{definition}
$\KS\phi$ can be read as \textit{knowing some $x$ such that $\phi(x)$} in the epistemic context. We use the usual abbreviations $\top, \bot, \lor, \to$, and write $\hKS$ for $\neg \KS\neg$, i.e., the dual of $\KS$. We define the \textit{free and bound occurrences} of variables as in first-order logic by viewing $\Box^x$ as a quantifier. We call $x$ is a \textit{free variable} in $\phi$ ($x\in FV(\phi)$), if there is a free occurrence of $x$ in $\phi$. We write $\phi(\vec{x})$ if all the free variables in $\phi$ are included in $\vec{x}$. Given an $\MLMSEQ$ formula $\phi$ and $x, y\in \Var$, we write $\phi[y\slash x]$ for the formula obtained by replacing every free occurrence of $x$ by $y$. To simplify the discussion, we do not include constant symbols and function symbols in the language and leave them to a future occasion.\footnote{Note that constant and functions can be coded using $\Ps$ and $\approx$ in full $\FOML$ (cf.\ e.g., \cite{Cresswell96}). However, our language is a fragment of $\FOML$.}. 

As for the semantics, to be general enough, following \cite{vB10,FOdef16} we use the first-order Kripke model with an \textit{increasing domain}, and flesh out the intuitive idea of mention-some discussed in the introduction. 
\begin{definition}
An (increasing domain) model $\M$ for $\MLMSEQ$ is a tuple $\lr{W, D, \delta, R, \rho}$ where: 
\begin{itemize}
\item[$W$] is a non-empty set.
\item[$D$] is a non-empty set.
\item[$R$]$\subseteq W\times W$ is a binary relation over $W$.
\item[$\delta$]$:W\to 2^D$ assigns to each $w\in W$ a \textit{non-empty} local domain s.t. $wRv$ implies $\delta(w)\subseteq \delta(v)$ for any $w,v\in W$. We also write $D_w$ for $\delta(w)$.
\item[$\rho$]$:\Ps\times W\to \bigcup_{n\in \omega}2^{D^n}$ such that $\rho$ assigns each $n$-ary predicate on each world an $n$-ary relation on $D$.\footnote{Following \cite{Cresswell96,G07FML}, we do not require that the interpretation of $P$ at a world is based on the local domain. Actually, as we will see later in Corollary \ref{co.sat}, this seemingly `counterintuitive' generalization does not affect the satisfiability or validity: each satisfiable formula $\phi$ is satisfiable in a model where $\rho(P,w)$ is based on objects in $D_w$ (cf.\ also \cite{Cresswell96}). \label{ft.inv}}  
\end{itemize} 
A \textit{constant domain} model is a model such that $D_w=D$ for any $w\in W$. A \textit{finite model} is a model with both a finite $W$ and a finite $D$. Given a model $\M$, we denote its components as $W^\M, D^\M, \delta^\M$, $R^\M$, and $\rho^\M$. To interpret free variables, we also need a variable assignment $\sigma: \Var\to D$.
The formulas are interpreted on models with variable assignments. 
$$\begin{array}{|rcl|}
\hline
\M, w, \sigma\vDash x\approx y &\Leftrightarrow & \sigma(x)=\sigma(y) \\ 
\M, w, \sigma\vDash P(x_1\cdots x_n) &\Leftrightarrow & (\sigma(x_1), \cdots, \sigma(x_n))\in \rho(P,w)  \\ 
\M, w, \sigma\vDash \neg\phi &\Leftrightarrow&   \M, w, \sigma\nvDash \phi \\ 
\M, w, \sigma\vDash (\phi\land \psi) &\Leftrightarrow&  \M, w, \sigma\vDash \phi \text{ and } \M, w, \sigma\vDash \psi \\ 
\M, w, \sigma\vDash \KS \phi &\Leftrightarrow& \text{there exists an $a\in \delta(w)$ such that }\\
&&\M, v, \sigma[x\mapsto a]\vDash\phi \text{ for all $v$ s.t.\ $wRv$}\\
\hline
\end{array}$$

\noindent where $\sigma[x\mapsto a]$ denotes another assignment just like $\sigma$ except mapping $x$ to $a$.
\end{definition}
As an intuitive definition, following \cite{Cresswell96}, we say $\phi$ is \textit{valid}, if $\phi$ is true on any $\M, w$ w.r.t.\ any $\sigma$ such that $\sigma(x)\in \delta^\M(w)$ for all $x\in \Var$. Correspondingly, $\phi$ is \textit{satisfiable} if $\neg \phi$ is not valid, i.e., $\M,w,\sigma\vDash\phi$ for some $\M, w$ and $\sigma$ such that $\sigma(x)\in \delta^\M(w)$ for all $x\in \Var$.

It is not hard to see that if $\sigma(x)=\sigma'(x)$ for all the free $x$ in $\phi$, $\M, w, \sigma \vDash \phi\iff \M,w, \sigma'\vDash\phi$. In this light, we  write $\M, w\vDash \phi [\vec{a}]$ to denote $\M,w,\sigma\vDash \phi(\vec{x})$ for any $\sigma$ such that $\sigma$ assigns free variables $\vec{x}$ of $\phi$ the corresponding objects in $\vec{a}$ given $|\vec{x}|=|\vec{a}|$, where $|\!\cdot\!|$ denotes the length.  

For comparison, the standard semantics for $\K$ is defined as:
$$\begin{array}{|rcl|}
\hline
\M, w, \sigma\vDash \K \phi &\Leftrightarrow& \text{for all $v$ such that $wRv$ } \M, v, \sigma \vDash\phi\\
\hline
\end{array}
$$
Truth conditions of $\KS\phi$ and $\hKS \phi$ can then be defined  using $\Box, \Diamond$:
$$\begin{array}{|rcl|}
\hline
\M, w, \sigma\vDash \KS \phi &\Leftrightarrow& \text{there exists an $a\in \delta(w)$ such that }\M, w, \sigma[x\mapsto a]\vDash\Box\phi\\
\M, w, \sigma\vDash \hKS \phi &\Leftrightarrow& \text{for all $a\in \delta(w)$, }\M, w, \sigma[x\mapsto a]\vDash\Diamond\phi\\
\hline
\end{array}
$$
It is now clear that $\K\phi$ is equivalent to $\KS\phi$ where  $x\not \in FV(\phi).$ Therefore $\MLMSEQ$ can be viewed as an extension of the basic propositional modal language. Therefore, in the context of $\MLMSEQ$, $\Box\phi$ can be viewed as an abbreviation. It also becomes evident that $\KS\phi$ and $\hKS\phi$ are essentially $\exists x \K\phi$ and $\forall x\Diamond \phi$ in \FOML\ respectively. In the following we study the expressivity of $\MLMSEQ$ in comparison with \FOML.




\section{Expressivity}\label{sec.exp}
 Note that our semantics for $\KS$ has the $\exists\forall$ pattern which is similar to the neighbourhood semantics for modal logic \cite{Cresswell96}. Moreover, $\MLMSEQ$ can be viewed as a fragment of the corresponding \FOML. Indeed, inspired by the \textit{world-object bisimulaion} for $\FOML$ \cite{vB10,FOdef16} and bisimulation for monotonic neighbourhood modal logic \cite{Pauly2000,MML03}, we propose a novel notion of bisimulation for $\MLMSEQ$. Before the formal definition, given a model $\M$, let $D_\M^*$ be the set of (possibly empty) finite sequence of objects in $D^\M$. A \textit{partial isomorphism} between $\vec{a}\in {D^*_\M}$ and $\vec{b}\in {D^*_\N}$ such that $|a|=|b|$ is an isomorphism mapping $a_i$ to $b_i$ w.r.t.\ relevant interpretations of predicates (cf. e.g, \cite{vB10}). It is \textit{partial} since it is not about all the objects in $D^\M$ and $D^\N$. 
\begin{definition}[$\ef$-Bisimulation]  Given two models $\M$ and $\N$, the relation $Z\subseteq (W^\M\times D_\M^*)\times (W^\N\times D_\N^*)$ is call an \textit{$\exists\K$-bisimulation}, if for every $((w,\vec{a}), (v, \vec{b}))\in Z$ such that $|\vec{a}|=|\vec{b}|$ the following holds (for brevity, comma in $(w,\vec{a})$ is omitted):
\begin{itemize}
\item[PISO] $\vec{a}$ and $\vec{b}$ form a partial isomorphism w.r.t.\ identity and interpretations of predicates at $w$ and $v$ respectively. 
\item[$\ef$Zig] For any $c\in D^\M_w$, there is a $d\in D^\N_v$ such that for any $v'\in W^\N$ if $vR v'$ then there exists $w'$ in $W^\M$ such that $wR w'$ and $w'\vec{a}c Z v'\vec{b}d$. 
\item[$\ef$Zag] For any $d\in D^\N_v$, there is a $c\in D^\M_w$ such that for any $w'\in W^\M$ if $wR w'$ then there exists $v'$ in $W^\N$ such that $vR v'$ and $w'\vec{a}c Z v'\vec{b}d$.
\end{itemize}
We say $\M, w\vec{a}$ and $\N,v\vec{b}$ are $\ef$-\textit{bisimilar} ($\M, w\vec{a}\bis_{\ef} \N,v\vec{b})$ if $|a|=|b|$ and there is an $\exists\K$-bisimulation linking $w\vec{a}$ and $v\vec{b}$. In particular, we say $\M,w$ and $\N,v$ are $\ef$-bisimilar if $\M, w\bis_{\ef} \N,v$, i.e., when $|\vec{a}|=|\vec{b}|=0$.  
\end{definition}
It is not hard to show that $\bis_{\ef}$ is indeed an equivalence relation. 

Note that our bisimulation notion is much weaker than isomorphism, in particular the domains of the two bisimilar models do not necessarily have the same cardinality.
\begin{example}\label{ex,msbis}
Consider the constant domain models $\M, \N$:
$$
\xymatrix@R-20pt{
\M:&\underline{w} \ar[r]\ar[dr]& v: Pa  & \N: & \underline{s} \ar[r]\ar[dr]& t: Pc\\
   &                    &u: Pb    &    &         & r
}
$$
where $D^\M=\{a,b\}$, $D^\N=\{c\}$, $\rho^\M(P,w)=\rho^\N(P,s)=\rho^\N(P,r)=\emptyset$, $\rho^\M(P,v)=\{a\}$, $\rho^\M(P,u)=\{b\}$, $\rho^\N(P,t)=\{c\}$. Suppose $P$ is the only predicate, we can show that $\M,w\bis_{\ef}\N,s$ by an $\exists\K$-bisimulation $Z$ (pay attention to the switch of the two models in the second half of the definitions of $\ef$Zig and $\ef$Zag): 
$$\{(w,s), (va, tc), (ub, tc), (vb, rc), (ua, rc) \}$$
Also note that $\ef$Zig and $\ef$Zag hold trivially for $w\vec{a}$ and $v\vec{b}$ if ${w}$ and $v$ \textit{do not} have any successor, based on the fact that local domains are non-empty by definition.  
\end{example}

We write $\M, w\vec{a}\equiv_{\MLMSEQ} \N, v\vec{b}$ if $|\vec{a}|=|\vec{b}|$ and for all any $\MLMSEQ$ formula $\phi(\vec{x})$ such that $|\vec{x}|=|\vec{a}|$: $\M, w\vDash \phi [\vec{a}] \iff \N, v\vDash \phi[\vec{b}].$ We can show that $\MLMSEQ$ is invariant under $\ef$-bisimilarity.  
\begin{theorem}\label{thm.inv}
$\M, w\vec{a}\bis_{\ef} \N,v\vec{b}$ then $\M, w\vec{a}\equiv_{\MLMSEQ} \N, v\vec{b}$. 
\end{theorem}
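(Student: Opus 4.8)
The plan is to prove invariance by induction on the structure of the formula $\phi(\vec{x})$, showing the stronger pointwise statement that for every $\ef$-bisimulation $Z$ and every pair $((w,\vec{a}),(v,\vec{b}))\in Z$ with $|\vec{a}|=|\vec{b}|=|\vec{x}|$, we have $\M,w\vDash\phi[\vec{a}]\iff\N,v\vDash\phi[\vec{b}]$. Throughout I identify the free variables $\vec{x}$ of $\phi$ with the sequences $\vec{a}$ and $\vec{b}$ positionally, so that the $i$-th free variable is assigned $a_i$ on the $\M$ side and $b_i$ on the $\N$ side; the remark in the excerpt that truth depends only on the assignment to free variables makes this bookkeeping legitimate.

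The base cases are handled directly by the \textbf{PISO} clause. For an equality atom $x_i\approx x_j$, the condition $\sigma(x_i)=\sigma(x_j)$ becomes $a_i=a_j$ on the $\M$ side and $b_i=b_j$ on the $\N$ side, and these are equivalent precisely because $\vec{a}\mapsto\vec{b}$ is a partial isomorphism w.r.t.\ identity. Similarly, for a predicate atom $P\vec{x}$, the membership $(a_{i_1},\dots,a_{i_n})\in\rho^\M(P,w)$ holds iff $(b_{i_1},\dots,b_{i_n})\in\rho^\N(P,v)$, again by \textbf{PISO} w.r.t.\ the interpretation of predicates. The Boolean cases $\neg\phi$ and $\phi\land\psi$ are routine from the induction hypothesis, using that $Z$ relates the same pair $((w,\vec{a}),(v,\vec{b}))$ for the subformulas.

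The modal case is the heart of the argument and the step I expect to be the main obstacle, since it must correctly unfold the packed $\exists x\Box$ semantics against the two-layered $\ef$Zig/$\ef$Zag conditions. Suppose $\M,w\vDash\KS\phi[\vec{a}]$; by the truth condition there is an object $c\in D^\M_w$ such that $\M,w'\vDash\phi[\vec{a}c]$ for every $w'$ with $wRw'$. I apply $\ef$Zag to the witness $c$ (note the switch of direction: $\ef$Zag reads ``for any $c\in D^\M_w$ there is $d\in D^\N_v$'') to obtain a $d\in D^\N_v$ that will serve as the existential witness on the $\N$ side. To verify $\M,v\vDash\KS\phi[\vec{b}]$ I must check $\N,v'\vDash\phi[\vec{b}d]$ for an arbitrary $v'$ with $vRv'$; but $\ef$Zag, instantiated at this $v'$, furnishes some $w'$ with $wRw'$ and $w'\vec{a}c\,Z\,v'\vec{b}d$. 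Since $wRw'$ gives $\M,w'\vDash\phi[\vec{a}c]$, the induction hypothesis applied to the $Z$-related pair $(w'\vec{a}c,v'\vec{b}d)$ yields $\N,v'\vDash\phi[\vec{b}d]$, as required. The converse direction, $\N,v\vDash\KS\phi[\vec{b}]\implies\M,w\vDash\KS\phi[\vec{a}]$, is symmetric but uses $\ef$Zig instead of $\ef$Zag.

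Two points deserve care when writing this up. First, one must confirm that the lengths match: the extended sequences $\vec{a}c$ and $\vec{b}d$ have length $|\vec{x}|+1$, matching the free variables of $\phi$ once the bound $x$ of $\KS$ is freed, so the induction hypothesis applies at the correct arity. Second, the non-emptiness of local domains (used implicitly in the example) guarantees that the witnesses $c$ and $d$ exist whenever needed, and the increasing-domain condition $\delta(w)\subseteq\delta(w')$ ensures that the witness $c\in D^\M_w$ remains available at each successor $w'$ so that $\phi[\vec{a}c]$ is well-posed there. With these observations, the induction closes and the theorem follows by taking $|\vec{a}|=|\vec{b}|=0$ for the sentence case.
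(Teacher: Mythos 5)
Your proof is correct and follows essentially the same route as the paper's: induction on the structure of $\phi$ with PISO disposing of the atomic cases, and, for $\KS\psi$, transferring the existential witness across the models and then matching worlds backwards before applying the induction hypothesis (the paper phrases this last step as a proof by contradiction; your direct version is equivalent and arguably cleaner). The only slip is terminological: the condition you quote and use in the forward direction --- ``for any $c\in D^\M_w$ there is a $d\in D^\N_v$ such that every successor $v'$ of $v$ has a matching successor $w'$ of $w$ with $w'\vec{a}c\,Z\,v'\vec{b}d$'' --- is what the paper calls $\ef$Zig, not $\ef$Zag (and symmetrically, the converse direction is the one that uses $\ef$Zag); since the content you actually apply is the right one, the argument stands once the two labels are swapped.
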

\begin{proof}
It suffices to prove that if $Z$ is an $\exists\K$-bisimulation linking $w\vec{a}$ and $ v\vec{b}$ such that $|\vec{a}|=|\vec{b}|$,  then $\M, w\vDash \phi [\vec{a}] \iff \N, v\vDash \phi[\vec{b}]$.
Given a bisimulation $Z$, we do induction on the structure of $\phi$. Supposing $\phi$ is an atomic formula, due to PISO, $\M,w\vDash \phi [\vec{a}]$ iff $\N,v\vDash\phi [\vec{b}]$ for any atomic formula $\phi (\vec{x})$. 

Suppose $\phi$ is in the shape of $\KS\psi(\vec{x})$ and  $\M,w\vDash \phi [\vec{a}]$. By the semantics there is a $c\in D^\M_w$ such that $\M,w'\vDash \psi [\vec{a}c]$ for all the $w'$ such that $wRw'.$ According to $\ef Zig$, there exists a $d\in D^\N_v$ such that the second half of the condition holds. We claim that  $\N,v'\vDash \psi [\vec{b}d]$ for any $v'$ such that $vRv'.$ Suppose not, then there is a $v'$ such that $\N,v'\nvDash \psi [\vec{b}d]$ and $vRv'.$ Then according to $\ef Zig$, there is $w'$ such that $wRw'$ and  $w'\vec{a}c Zv'\vec{b}d$. By IH, $\M,w'\nvDash \psi [\vec{a}c]$. Contradiction. 
\end{proof}

\begin{corollary}
$\M, w\bis_{\ef} \N,v$ then for any closed $\MLMSEQ$-formula $\phi$: 
$\M, w\vDash \phi \iff \N, v\vDash \phi.$
\end{corollary}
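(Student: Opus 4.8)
The plan is to derive this corollary directly from Theorem \ref{thm.inv}, which is the substantive result. The key observation is that the corollary is precisely the special case of the theorem in which the object-sequences $\vec{a}$ and $\vec{b}$ are empty, restricted to closed formulas. First I would note that $\M, w\bis_{\ef}\N, v$ is, by definition, exactly the statement $\M, w\vec{a}\bis_{\ef}\N, v\vec{b}$ with $|\vec{a}|=|\vec{b}|=0$, i.e.\ both sequences empty.

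Next I would invoke Theorem \ref{thm.inv} to conclude $\M, w\vec{a}\equiv_{\MLMSEQ}\N, v\vec{b}$ for these empty sequences. Unpacking the definition of $\equiv_{\MLMSEQ}$ in this case, this says that for every formula $\phi(\vec{x})$ with $|\vec{x}|=0$ we have $\M, w\vDash\phi[\vec{a}]\iff\N, v\vDash\phi[\vec{b}]$. Since a formula $\phi(\vec{x})$ with $|\vec{x}|=0$ is exactly a formula with no free variables, i.e.\ a closed formula, and since for closed $\phi$ the notation $\M, w\vDash\phi[\vec{a}]$ (with $\vec{a}$ empty) reduces to plain $\M, w\vDash\phi$ (using the stated fact that satisfaction of $\phi$ depends only on the assignment to its free variables, here vacuous), the conclusion $\M, w\vDash\phi\iff\N, v\vDash\phi$ follows immediately.

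I do not expect any real obstacle here: the corollary is essentially a matter of specializing and then translating notation. The only point requiring a word of care is the identification of ``closed formula'' with ``formula $\phi(\vec{x})$ such that $|\vec{x}|=0$'' and the justification that the bracket-notation $[\vec{a}]$ for the empty sequence collapses to ordinary satisfaction; both are immediate from the definitions and the remark preceding Theorem \ref{thm.inv} that satisfaction is insensitive to the assignment on non-free variables. Thus the proof is a two-line appeal to the theorem with empty parameter sequences.
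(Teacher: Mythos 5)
Your proof is correct and matches the paper's (implicit) reasoning exactly: the paper states this corollary without proof precisely because it is the specialization of Theorem~\ref{thm.inv} to empty object sequences, where formulas $\phi(\vec{x})$ with $|\vec{x}|=0$ are exactly the closed formulas. Nothing further is needed.
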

By the above invariance results, we can show that many natural  combinations of quantifiers and modalities are not expressible in $\MLMSEQ$. 
\begin{proposition}
$\K \exists x Px$, $\exists x \hK Px$ and $\hK\exists x Px$ are not expressible in $\MLMSEQ$. 
\end{proposition}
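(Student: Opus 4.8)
The plan is to appeal to the invariance of $\MLMSEQ$ under $\ef$-bisimilarity (Theorem~\ref{thm.inv} and its corollary for closed formulas): if any of the three \FOML\ formulas were equivalent to some $\MLMSEQ$ formula, then $\ef$-bisimilar pointed models would be forced to agree on it. Hence for each formula it suffices to exhibit two pointed models that are $\ef$-bisimilar yet disagree on it. Recall that in the present notation the three formulas read $\K\exists x Px$, $\exists x\,\hK Px$ and $\hK\exists x Px$, and each differs from the admissible patterns $\KS\psi=\exists x\K\psi$ and $\hKS\psi=\forall x\hK\psi$ only in the relative scoping of a quantifier and a modality; this mismatch is exactly what the bisimulation exploits.

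For $\K\exists x Px$ I would reuse the two models $\M,\N$ of Example~\ref{ex,msbis}, already shown there to satisfy $\M,w\bis_{\ef}\N,s$. In $\M$ both successors of $w$ (namely $v$ and $u$) contain a $P$-object, so $\M,w\vDash\K\exists x Px$; but the successor $r$ of $s$ in $\N$ has $\rho(P,r)=\emptyset$, so $\N,s\nvDash\K\exists x Px$. Invariance then rules out expressibility.

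For the remaining two formulas I would build a fresh pair. Take $\M'$ with root $w'$, two leaf-successors $v_1,v_2$, constant domain $\{a,b\}$, and $\rho(P,v_1)=\{a\}$, $\rho(P,v_2)=\emptyset$ (with $P$ empty at $w'$); and take $\N'$ with root $s'$, a single leaf-successor $t$, domain $\{c\}$, and $P$ empty everywhere. In $\M'$ the object $a$ is $P$ at $v_1$, so $\M',w'\vDash\exists x\,\hK Px$ and, since $v_1$ contains a $P$-object, also $\M',w'\vDash\hK\exists x Px$; whereas in $\N'$ no object is ever $P$, so $\N',s'$ falsifies both. It then remains to check $\M',w'\bis_{\ef}\N',s'$ via the relation linking $(w',s')$ together with the leaf pairs $(v_2 a,\,t c)$, $(v_1 b,\,t c)$ and $(v_2 b,\,t c)$.

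The main (though routine) obstacle is verifying this last bisimulation, and in particular handling the direction-swap in $\ef$Zig and $\ef$Zag against the all-$P$-empty model $\N'$. The crux is PISO: since $P$ is empty throughout $\N'$, every linked pair must match an object that is $P$-free at the relevant world of $\M'$. Thus in the $\ef$Zag direction the witness object chosen in $\M'$ must be $b$, which is $P$-free at every successor of $w'$, so that PISO is met at both $v_1$ and $v_2$; and in the $\ef$Zig direction, for the object $a$ the matching successor must be routed to $v_2$ (where $a$ is $P$-free) rather than to $v_1$. This is precisely why $\M'$ needs a second object $b$ and a second successor $v_2$: they supply, respectively, a globally $P$-free object for $\ef$Zag and a $P$-free successor for $a$ in $\ef$Zig. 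Once PISO is secured at the three leaf pairs (where $\ef$Zig and $\ef$Zag hold vacuously, the leaves having no successors), the remaining conditions at the root reduce to these finite checks, completing the argument.
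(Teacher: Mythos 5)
Your proposal is correct and follows essentially the same route as the paper: the first formula is handled exactly as in the paper by reusing Example~\ref{ex,msbis}, and your pair $\M',\N'$ for $\exists x\,\hK Px$ and $\hK\exists x Px$ is (up to renaming $v_1,v_2,s'$ for $v,u,s$) precisely the paper's second counterexample, with the same bisimulation $\{(w,s),(ua,tc),(vb,tc),(ub,tc)\}$. The only difference is that you spell out the PISO/$\ef$Zig/$\ef$Zag verification that the paper leaves implicit, and your checks are accurate.
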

\begin{proof} In this proof, we again consider constant domain models.  For $\K \exists x Px$, consider the bisimilar models in Example \ref{ex,msbis}. $\Box\exists x Px$ holds on $\M,w$ but not on $\N,s$, thus it is not expressible in $\MLMSEQ$. 
For $\exists x \hK Px$ and $\hK\exists x Px$, consider:  
$$
\xymatrix@R-20pt{
\M:&\underline{w} \ar[r]\ar[dr]& v: Pa  & \N: & \underline{s} \ar[r]& t\\
   &                    &u    &    &         & 
}
$$
where $D^\M=\{a,b\}$, $D^\N=\{c\}$ as before, $\rho^\M(P,w)=\rho^\M(P,u)=\rho^\N(P,t)=\rho^\N(P,s)=\emptyset$, $\rho^\M(P,v)=\{a\}$. Clearly, $\exists x\hK Px$ and $\hK\exists x Px$ are true at $\M, w$ but false at $\N, s$. However, we can show that $\M,w\bis_{\ef}\N,s$ by an $\exists\K$-bisimulation $Z$: 
$$\{(w,s), (ua, tc), (vb, tc), (ub, tc)\}.$$
\end{proof}
To precisely characterize $\MLMSEQ$ within the corresponding $\FOML$, we need a notion of saturation. In the following, we write $\Gamma(\vec{x})$ if all the free variables in the set of $\MLMSEQ$-formulas $\Gamma$ are included in  $\vec{x}$. Inspired by \cite{FOdef16}, we can generalize the concept of m-saturation for propositional modal logic (cf.\ \cite{mlbook}) as follows:  
\begin{definition}
A model $\M$ is said to be \textit{$\ef$-saturated}, if for any $w\in W^\M$, and any finite sequence $\vec{a}\in {D^*_\M}$, the following two conditions are satisfied: 
\begin{itemize}
\item[$\exists\Box$-type] If for each finite subset $\Delta$ of a set $\Gamma (\vec{y}x)$ where $|\vec{y}|=|\vec{a}|$, $\M, w\vDash\KS \bigwedge\! \Delta [\vec{a}]$,\footnote{Here we require $\vec{y}$ are assigned $\vec{a}$ correspondingly. This is only to avoid introducing extended language with constants for $\vec{a}$ as in standard model theory, since we did not define interpretation for constant symbols. Similarly below.} then there is a $c\in D^\M_w$ such that $\M, w\vDash \Box \phi[\vec{a}c]$ for all $\phi\in \Gamma$, where $x$ is assigned $c$.  
\item[$\Diamond$-type] If for each finite subset $\Delta$ of   $\Gamma(\vec{x})$ such that $|\vec{x}|=|\vec{a}|$, $\M, w\vDash\Diamond \bigwedge\!\Delta [\vec{a}]$, then $\M, w\vDash\Diamond \phi[\vec{a}]$ for each $\phi\in\Gamma$.
\end{itemize}
\end{definition}
Note that in the above definition, to simplify the presentation, we use $\Box, \Diamond$ which are expressible in $\MLMSEQ$. $\Diamond$-type condition is essentially the m-saturation adapted with variable assignments.  Recall that a finite model has a finite domain and finitely many worlds. It can be verified that:\footnote{Due to limited space, we omit some proofs and leave them to the extended version of this conference paper.} \begin{proposition}
Every finite model is $\ef$-saturated. 
\end{proposition}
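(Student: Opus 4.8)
The plan is to verify the two saturation conditions separately, in each case using finiteness to replace the usual compactness argument by a finite ``diagonalization''. The common pattern is this: the hypothesis guarantees that every finite subset $\Delta$ of $\Gamma$ is realized by a \emph{single} witness --- an $R$-successor in the $\Diamond$-type case, a local-domain element in the $\exists\Box$-type case --- and since the pool of candidate witnesses is finite, any failure of the conclusion would let me assemble one finite $\Delta\subseteq\Gamma$ that no witness can realize, contradicting the hypothesis. No genuine combinatorial difficulty remains once I observe that in a finite model both $W^\M$ and each $\delta(w)\subseteq D^\M$ are finite; compactness is never invoked.

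For the $\Diamond$-type condition I would first note that, since $W^\M$ is finite, $w$ has only finitely many successors $v_1,\dots,v_k$ (and the hypothesis applied to $\Delta=\emptyset$ already forces $k\geq 1$). Assuming toward a contradiction that no single successor realizes all of $\Gamma$ at $\vec a$, I choose for each $v_i$ some $\phi_i\in\Gamma$ with $\M,v_i\nvDash\phi_i[\vec a]$. Then $\Delta=\{\phi_1,\dots,\phi_k\}$ is a finite subset of $\Gamma$, so by hypothesis $\M,w\vDash\Diamond\bigwedge\Delta[\vec a]$, i.e.\ some $v_j$ satisfies every conjunct, in particular $\M,v_j\vDash\phi_j[\vec a]$ --- contradicting the choice of $\phi_j$. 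Hence a single successor realizes all of $\Gamma$, which in particular yields $\M,w\vDash\Diamond\phi[\vec a]$ for each $\phi\in\Gamma$.

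For the $\exists\Box$-type condition the analogous finite object is the local domain $\delta(w)=\{c_1,\dots,c_m\}$ (non-empty, and finite because $D^\M$ is). Unfolding the semantics of $\KS$, the hypothesis says that for each finite $\Delta\subseteq\Gamma$ there is a single $c\in\delta(w)$ witnessing $\Box\bigwedge\Delta$, i.e.\ with $\M,v\vDash\bigwedge\Delta[\vec a c]$ for all successors $v$ of $w$. I then argue by contradiction exactly as before: if no single $c_i$ witnesses $\Box\phi[\vec a c_i]$ for \emph{all} $\phi\in\Gamma$ simultaneously, I pick for each $c_i$ a formula $\phi_i\in\Gamma$ with $\M,w\nvDash\Box\phi_i[\vec a c_i]$, form $\Delta=\{\phi_1,\dots,\phi_m\}$, apply the hypothesis to obtain a witness $c^\ast=c_j$ for $\Box\bigwedge\Delta$, and conclude $\M,w\vDash\Box\phi_j[\vec a c_j]$, contradicting the choice of $\phi_j$ (the no-successor case is trivial, since then every $\Box\phi[\vec a c]$ holds vacuously and any $c\in\delta(w)$ works).

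The one point I would be most careful about --- and the only place where anything subtle happens --- is that both arguments depend on the witness for $\bigwedge\Delta$ being a \emph{single} successor, resp.\ a single domain object, shared across all conjuncts, rather than a separate witness per conjunct. This is precisely the strength of carrying $\KS\bigwedge\Delta$ (the packed $\exists x\Box$) in the hypothesis rather than the weaker $\bigwedge_{\phi\in\Delta}\KS\phi$, and it is exactly what makes the finite diagonalization close. Once this is kept straight, the proof is a routine pigeonhole argument and finiteness of $W^\M$ and of each $\delta(w)$ does all the work.
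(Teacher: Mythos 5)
Your proof is correct, and it is precisely the standard finiteness/pigeonhole argument the paper intends but omits for space: in both conditions you diagonalize over a finite pool of witnesses (the finitely many $R$-successors of $w$, respectively the finitely many elements of $\delta(w)$), using that the hypothesis supplies a \emph{single} witness per finite $\Delta$. It is also a virtue that you establish the stronger single-successor form of the $\Diamond$-type condition, since that is the form actually invoked in the paper's proof of Theorem \ref{thm.hm}.
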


We can obtain the Hennessy-Milner-type theorem (cf.\ \cite{mlbook}) to establish the equality between $\ef$-bisimilarity and $\MLMSEQ$-equivalence.  
\begin{theorem}\label{thm.hm}
For $\ef$-saturated models $\M,\N$ and $|\vec{a}|=|\vec{b}|$: $$\M, w\vec{a}\bis_{\ef} \N,v\vec{b} \iff \M,w \vec{a}\equiv_{\MLMSEQ} \N,v\vec{b}$$ 
\end{theorem}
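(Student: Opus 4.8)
The plan is to prove the two directions separately. The left-to-right direction $\M,w\vec{a}\bis_{\ef}\N,v\vec{b}\Rightarrow\M,w\vec{a}\equiv_{\MLMSEQ}\N,v\vec{b}$ is exactly the invariance result of Theorem~\ref{thm.inv}, so the real content is the converse. For that I would run a Hennessy--Milner argument: define
$$Z=\{((w,\vec{a}),(v,\vec{b}))\mid |\vec{a}|=|\vec{b}|,\ \M,w\vec{a}\equiv_{\MLMSEQ}\N,v\vec{b}\}$$
and show that $Z$ is an $\exists\Box$-bisimulation between the $\ef$-saturated models $\M$ and $\N$; the theorem then follows since $(w\vec{a},v\vec{b})\in Z$ by hypothesis. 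The PISO clause is immediate: the atomic formulas $x_i\approx x_j$ and $P\vec{x}$ are $\MLMSEQ$-formulas, so their agreement under $\equiv_{\MLMSEQ}$ says precisely that $a_i\mapsto b_i$ is a well-defined partial isomorphism w.r.t.\ identity and the predicate interpretations at $w$ and $v$.

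For $\ef$Zig, fix $c\in D^\M_w$; I must produce the object witness $d\in D^\N_v$ first. Let $\Gamma=\{\phi(\vec{y}x)\mid \M,w\vDash\Box\phi[\vec{a}c]\}$ be the ``$\Box$-type'' of $c$. For any finite $\Delta\subseteq\Gamma$ we have $\M,w\vDash\Box\bigwedge\Delta[\vec{a}c]$, hence $\M,w\vDash\KS\bigwedge\Delta[\vec{a}]$ with witness $c$; since $\KS\bigwedge\Delta$ is an $\MLMSEQ$-formula with free variables among $\vec{y}$, the equivalence gives $\N,v\vDash\KS\bigwedge\Delta[\vec{b}]$. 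This is exactly the premise of the $\exists\Box$-type saturation of $\N$, which therefore yields a single $d\in D^\N_v$ with $\N,v\vDash\Box\phi[\vec{b}d]$ for all $\phi\in\Gamma$. Taking contrapositives and dualizing, this $d$ has the transfer property that $\N,v\vDash\Diamond\chi[\vec{b}d]$ implies $\M,w\vDash\Diamond\chi[\vec{a}c]$ for every formula $\chi$.

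It remains, given any $v'$ with $vRv'$, to find $w'$ with $wRw'$ and $(w'\vec{a}c,v'\vec{b}d)\in Z$. Let $\Theta=\{\psi(\vec{y}x)\mid \N,v'\vDash\psi[\vec{b}d]\}$ be the complete type of $v'$. For each finite $\Theta'\subseteq\Theta$ we have $\N,v\vDash\Diamond\bigwedge\Theta'[\vec{b}d]$, so the transfer property gives $\M,w\vDash\Diamond\bigwedge\Theta'[\vec{a}c]$; thus $\Theta$ is finitely satisfiable in the successors of $w$. Invoking the $\Diamond$-type (m-)saturation of $\M$ to realize $\Theta$ in a single successor $w'$ of $w$, and using that $\Theta$ is complete, we get $\M,w'\vec{a}c\equiv_{\MLMSEQ}\N,v'\vec{b}d$, i.e.\ $(w'\vec{a}c,v'\vec{b}d)\in Z$, as required. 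The $\ef$Zag clause is proved symmetrically, swapping the roles of $\M$ and $\N$ (and hence using $\exists\Box$-saturation of $\M$ and $\Diamond$-saturation of $\N$).

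The main obstacle is the $\ef$Zig step, and specifically the need to commit to the object witness $d$ \emph{before} inspecting the successors $v'$: this matches the $\forall c\,\exists d\,\forall v'\,\exists w'$ quantifier nesting of the clause and is exactly where the novel $\exists\Box$-type saturation (rather than ordinary m-saturation) does the work, packaging the existential-over-objects and the box-over-worlds into one witness. Care is also needed to check that each finite conjunction $\bigwedge\Delta$ and $\bigwedge\Theta'$ is again an $\MLMSEQ$-formula with the right free variables so that $\equiv_{\MLMSEQ}$ and the saturation conditions apply, and that the $\Diamond$-saturation is used in its single-successor form so as to realize the complete type $\Theta$ in one world.
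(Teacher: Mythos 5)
Your proposal is correct and follows essentially the same route as the paper's proof: the same relation $Z$ of $\MLMSEQ$-equivalence, the same $\Box$-type set $\Gamma$ fed into the $\exists\Box$-saturation of $\N$ to fix the witness $d$ before inspecting successors, and the same realization of the complete type of $v'$ via $\Diamond$-saturation of $\M$ (your ``transfer property'' is just the paper's inline contradiction argument packaged as a lemma). Nothing further is needed.
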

\begin{proof}
Due to Theorem~\ref{thm.inv}, we only need to show the right-to-left direction. We define $Z=\{(w\vec{a}, v\vec{b})\mid w\in W^\M, v\in W^\N, \vec{a}\in D_\M^*, \vec{b}\in D_\N^*, |\vec{a}|=|\vec{b}|, \M, w\vec{a}\equiv_{\MLMSEQ} \N, v\vec{b}\}$. We need to show that $Z$ is an $\exists\K$-bisimulation. PISO is straightforward as $\vec{a}$ and $\vec{b}$ are finite and partial isomorphism between them can be expressed by atomic formulas. We only show $\ef Zig$ since $\ef Zag$ is similar. Assuming $w\vec{a}Z v\vec{b}$ and there is a $c\in D^\M_w$. Let $\Gamma=\{\phi(\vec{y}x)\mid w\vDash \Box \phi [\vec{a}c], |\vec{y}|=|\vec{a}|\}$. Now for any finite set $\Delta\subseteq \Gamma$ we have $w\vDash \KS\bigwedge\! \Delta[\vec{a}]$. Since  $w\vec{a}\equiv_{\MLMSEQ}v\vec{b}$, $v\vDash \KS\bigwedge\! \Delta[\vec{b}]$. Now by $\exists\Box$-type condition, we know there is a $d\in D^\N_v$ such that $v\vDash \Box\phi [\vec{b}d]$ for all $\phi\in\Gamma$ ($\star$). Now take an arbitrary $v'$ such that $vR v'$, we show there is a $w'$ such that $wRw'$ and $w'\vec{a}c\equiv_{\MLMSEQ}v'\vec{b}d$. Let $\Sigma=\{\phi(\vec{x})\mid \N,v'\vDash \phi [\vec{b}d], |\vec{x}|=|\vec{b}d|\}$. It is clear that for each finite set $\Delta\subseteq \Sigma$, $v\vDash\Diamond \bigwedge\! \Delta[\vec{b}d]$, namely $v\vDash\neg \Box \neg \bigwedge\! \Delta[\vec{b}d]$. We can show $w\vDash \neg \Box \neg \bigwedge\! \Delta[\vec{a}c]$. Suppose not, then $w\vDash \Box \neg \bigwedge\! \Delta[\vec{a}c]$, thus $\neg \bigwedge\! \Delta\in\Gamma$. By ($\star$), $v\vDash \Box\neg \bigwedge\! \Delta[\vec{b}d]$, which is in contradiction with $v\vDash\neg \Box \neg \bigwedge\! \Delta[\vec{b}d]$. Now it is clear that $w\vDash \Diamond \bigwedge\! \Delta[\vec{a}c]$ for each finite $\Delta\subseteq \Sigma$. By the $\Diamond$-type condition of $\ef$-saturation, there is a successor $w'$ of $w$ such that $w'\vDash \Sigma[\vec{a}c]$. Therefore $w'\vec{a}c\equiv_{\MLMSEQ} v'\vec{b}d,$ namely $(w'\vec{a}c, v'\vec{b}d)\in Z$. This completes the proof for $\ef Zig$.
\end{proof}
\begin{corollary}
 For finite models: $\M, w\vec{a}\bis_{\ef} \N,v\vec{b} \iff \M,w \vec{a}\equiv_{\MLMSEQ} \N,v\vec{b}$. For $\ef$-saturated models: $\M, w\bis_{\ef} \N,v \iff \M, w \text{ and }\N,v$ satisfies the same closed $\MLMSEQ$-formulas (sentences).
\end{corollary}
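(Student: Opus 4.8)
The plan is to derive both statements as direct specializations of the Hennessy--Milner-type Theorem~\ref{thm.hm}, which already carries the full weight of the argument; no fresh induction or model construction is required, so the proof amounts to bookkeeping over the relevant definitions.

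For the first claim, I would first invoke the Proposition stating that every finite model is $\ef$-saturated. Consequently, any pair of finite models $\M,\N$ automatically satisfies the hypothesis of Theorem~\ref{thm.hm}, and the desired biconditional $\M, w\vec{a}\bis_{\ef} \N,v\vec{b} \iff \M,w \vec{a}\equiv_{\MLMSEQ} \N,v\vec{b}$ (under $|\vec{a}|=|\vec{b}|$) is precisely the conclusion of that theorem instantiated to finite models. Note that the easy left-to-right direction is already covered by Theorem~\ref{thm.inv} and requires no saturation assumption at all.

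For the second claim, I would set $|\vec{a}|=|\vec{b}|=0$ in Theorem~\ref{thm.hm}. On the left-hand side, $\M, w\vec{a}\bis_{\ef} \N,v\vec{b}$ with empty object sequences reduces by definition to $\M, w\bis_{\ef} \N,v$. On the right-hand side, $\M, w\vec{a}\equiv_{\MLMSEQ} \N,v\vec{b}$ unfolds to: for every $\MLMSEQ$-formula $\phi(\vec{x})$ with $|\vec{x}|=0$, $\M, w\vDash \phi \iff \N, v\vDash \phi$. Since a formula $\phi(\vec{x})$ with $|\vec{x}|=0$ is exactly a formula with no free variables, this is the statement that $\M, w$ and $\N, v$ satisfy the same closed $\MLMSEQ$-formulas, as required.

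There is essentially no obstacle here. The only point deserving a moment's care is the reformulation in the second claim: one must confirm that the phrase ``satisfy the same sentences'' coincides literally with the $|\vec{a}|=|\vec{b}|=0$ instance of $\equiv_{\MLMSEQ}$. This follows from the definition of $\equiv_{\MLMSEQ}$ together with the earlier observation that the truth of a formula depends only on the assignment to its free variables, so that for a closed $\phi$ the satisfaction relation $\M,w\vDash\phi$ is well defined independently of any assignment.
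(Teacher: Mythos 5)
Your proof is correct and is exactly the paper's intended derivation: the corollary is stated without explicit proof precisely because it follows, as you argue, by combining the proposition that every finite model is $\ef$-saturated with Theorem~\ref{thm.hm}, and by instantiating $|\vec{a}|=|\vec{b}|=0$ (closed formulas) for the second claim. Your care about the well-definedness of $\M,w\vDash\phi$ for closed $\phi$ independently of the assignment is the right (and only) point needing a check.
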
 
Now we can characterize $\MLMSEQ$ in the corresponding first-order language. 
\begin{definition} Given $\Var, \Ps$ as before, 
the corresponding $\FOML$ language is defined as follows: 
$$ \phi::= x\approx y \mid P\vec{x} \mid \neg\phi \mid (\phi\land\phi)\mid  \forall x\phi \mid \Box\phi $$
The corresponding two-sorted first-order language $\SFOL$ is: 
$$ \phi::= x\approx y \mid Q u\vec{x} \mid Ruv \mid Eux \mid \neg\phi \mid (\phi\land\phi)\mid  \forall x\phi \mid \forall u\phi $$
where $x, y\in \Var$ and $u,v\in \War$ which is a collection of \textit{world variables} disjoint from $\Var$; $Q\in \Qs$ and $\Qs$ is the smallest collection of predicate symbols such that for each $n$-ary $P\in \Ps$ there is a unique $Q_P\in \Qs$ with the arity  $n+1$. $E$ is a new predicate symbol to say which  object exists on which world.
\end{definition}
It is trivial to define a translation $r$ from $\MLMSEQ$ to the corresponding $\FOML$ by setting $r(\KS\phi)=\exists x \Box r(\phi)$. It is clear that this translation is truth preserving. To translate $\MLMSEQ$ into the formulas in the corresponding $\SFOL$ (with $u,v$ as the only world variables), we define the following $t_u$ inspired by \cite{vB10}:    
\begin{definition}[Translation from $\MLMSEQ$ to $\SFOL$]
$$\begin{array}{l}
t_u(x\approx y)= x\approx y \qquad t_u(P\vec{x})= Q_P(u, \vec{x}) \\
t_u(\neg \psi)=\neg t_u(\psi) \qquad t_u(\phi\land\psi)=t_u(\phi)\land t_u(\psi).\\
t_u(\KS\psi)=\exists x (Eux \land \forall v (Ruv \to t_{v}(\psi)) \\
\end{array}
$$
$t_v$ is defined symmetrically, by swapping $u$ and $v$.
\end{definition}
In this way, every $\MLMSEQ$-formula is translated into a two-world-variable formula of $\SFOL$ with one free variable, similar to the standard translation of  basic modal language to first-order language. 

We can also view a model for $\MLMSEQ$ as a model for $\SFOL$ by turning $\rho(w, P)$ into the interpretation of the corresponding $Q_P$ in the most natural way. It is not hard to show: 
\begin{proposition} For any $\MLMSEQ$ formula $\phi$: 
$$\M,w, \sigma\vDash \phi \iff \M, \sigma' \Vdash t_u(\phi)$$
\noindent where $\sigma'$ is $\sigma$ extended by $u\mapsto w$, $\Vdash$ is the standard semantics for $\SFOL$ (cf. e.g. \cite{G07FML}).  
\end{proposition}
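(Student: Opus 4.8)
The plan is to prove the truth-preservation equivalence $\M,w,\sigma\vDash \phi \iff \M,\sigma'\Vdash t_u(\phi)$ by a straightforward induction on the structure of the $\MLMSEQ$-formula $\phi$, where $\sigma'$ extends $\sigma$ by sending the world variable $u$ to $w$. The key preliminary observation is that the two-sorted structure used on the right is literally the same object $\M$, reread as an $\SFOL$-model: the object sort is interpreted by $D^\M$ (with $\War$-variables ranging over $W^\M$), each $Q_P$ is interpreted as $\{(w,a_1,\dots,a_n)\mid (a_1,\dots,a_n)\in\rho^\M(P,w)\}$, the binary symbol $R$ is interpreted by $R^\M$, and $E$ is interpreted as the membership relation $\{(w,a)\mid a\in\delta^\M(w)\}$ recording which objects exist at which world. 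Once this dictionary is fixed, the whole proposition becomes a matter of checking that $t_u$ respects the semantic clauses case by case.

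First I would handle the base cases. For $x\approx y$, both sides assert $\sigma(x)=\sigma(y)$ since $t_u$ leaves equality untouched and $\sigma'$ agrees with $\sigma$ on $\Var$. For the atomic case $P\vec{x}$, the left side says $(\sigma(x_1),\dots,\sigma(x_n))\in\rho^\M(P,w)$, while the right side says $\M,\sigma'\Vdash Q_P(u,\vec{x})$, i.e.\ $(\sigma'(u),\sigma(x_1),\dots,\sigma(x_n))=(w,\sigma(x_1),\dots,\sigma(x_n))$ lies in the interpretation of $Q_P$; by the chosen interpretation these coincide. The Boolean cases $\neg\psi$ and $\phi\land\psi$ are immediate from the induction hypothesis, since $t_u$ commutes with $\neg$ and $\land$.

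The only case with real content is the modal clause $\KS\psi$, where $t_u(\KS\psi)=\exists x\,(Eux \land \forall v\,(Ruv \to t_v(\psi)))$. Unfolding the $\SFOL$ semantics, $\M,\sigma'\Vdash t_u(\KS\psi)$ means there is an object $a\in D^\M$ with $(\sigma'(u),a)=(w,a)$ satisfying $Eux$ under $x\mapsto a$ — i.e.\ $a\in\delta^\M(w)$ — such that for every $v$-valuation $w'$ with $R^\M w w'$ we have $\M,\sigma'[x\mapsto a][v\mapsto w']\Vdash t_v(\psi)$. Here I must invoke the induction hypothesis \emph{applied to $t_v$ at the world $w'$}, not $t_u$ at $w$; this is why the definition makes $t_u,t_v$ symmetric, and I would note once (or state as a trivial sub-lemma proved by a simultaneous induction) that the proposition holds uniformly for both world variables, so that the hypothesis delivers $\M,\sigma[x\mapsto a],w'\vDash\psi$. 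Matching this against the modal truth condition $\M,w,\sigma\vDash\KS\psi$ — there exists $a\in\delta^\M(w)$ such that $\M,w',\sigma[x\mapsto a]\vDash\psi$ for all $w'$ with $wR^\M w'$ — gives the equivalence.

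The main obstacle, and really the only subtlety, is bookkeeping about which world variable is active and making sure the induction is set up symmetrically in $u$ and $v$ so that the hypothesis is available at the successor world; and ensuring the existential witness $a$ is correctly forced by the conjunct $Eux$ to lie in the local domain $\delta^\M(w)$, matching the domain restriction in the semantics of $\KS$. Everything else is a routine unwinding of definitions, so I would present it compactly and spend the written space only on the $\KS$ case and the symmetric-induction remark.
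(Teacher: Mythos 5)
Your proof is correct, and it fills in exactly the argument the paper gestures at: the paper itself gives no proof of this proposition, dismissing it with ``It is not hard to show,'' so the intended justification is precisely the routine induction you carry out. You also correctly identify and handle the only two points of substance --- the simultaneous (symmetric) induction over the two world variables $u$ and $v$, which is needed because $t_u$ and $t_v$ call each other in the modal clause, and the role of the conjunct $Eux$ in forcing the existential witness into the local domain $\delta^\M(w)$ so as to match the semantics of $\KS$.
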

It follows from the compactness of $\SFOL$ that:\footnote{\label{ft.red}\SFOL\ inherits the compactness from first-order logic since there is a truth preserving translation from \SFOL\ to \FOL\ by using two new unary predicates $S_1$, $S_2$ for the two sorts, together with a single unrestricted quantifiers to mimic the two-sorted quantifiers. Note that we can also express the constraints on the sorts in $\FOL$ by $\theta(S_1,S_2)=\forall x ((S_1 x\lor S_2x) \land \neg (S_1x\land S_2x))$.} 
\begin{proposition}\label{prop.compact}
$\MLMSEQ$ is compact.
\end{proposition}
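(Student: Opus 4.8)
The plan is to transfer compactness from \SFOL\ --- which by footnote~\ref{ft.red} inherits it from ordinary first-order logic --- to \MLMSEQ\ along the truth-preserving translation $t_u$ established just above. Here compactness means that a set $\Gamma$ of \MLMSEQ-formulas is satisfiable whenever each of its finite subsets is (the converse being immediate). So I would fix a finitely satisfiable $\Gamma$ and aim to produce a single $\M,w,\sigma$ with $\sigma(x)\in\delta^\M(w)$ for every $x\in\Var$ and $\M,w,\sigma\vDash\phi$ for all $\phi\in\Gamma$.

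First I would form an enriched \SFOL-theory whose models are forced to read back as \emph{legitimate} \MLMSEQ-models under an \emph{actualist} assignment:
\[
\Gamma^\ast=\{\,t_u(\phi)\mid\phi\in\Gamma\,\}\cup\{\,Eux\mid x\in\Var\,\}\cup\{\,\forall u\exists x\,Eux,\ \forall u\forall v\forall x\big((Ruv\wedge Eux)\to Evx\big)\,\}.
\]
The last two sentences express non-emptiness of every local domain and the increasing-domain condition, while the guards $Eux$ demand that each object variable denote something existing at the world named by $u$. I would then check that every finite subset of $\Gamma^\ast$ is \SFOL-satisfiable: such a subset involves only finitely many $t_u(\phi)$ coming from a finite $\Delta\subseteq\Gamma$, finitely many guards, and the two frame sentences. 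Finite satisfiability of $\Gamma$ provides an \MLMSEQ-model $\M,w,\sigma$ with $\sigma(x)\in\delta^\M(w)$ for all $x$ and $\M,w,\sigma\vDash\Delta$; viewing $\M$ as an \SFOL-structure and extending $\sigma$ by $u\mapsto w$, the translation proposition yields all the required $t_u(\phi)$, the actualist convention yields all the guards, and the definition of an increasing-domain model yields the two frame sentences.

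By \SFOL-compactness there is then a structure $\M'$ with an assignment $\tau$ satisfying all of $\Gamma^\ast$. Because $\M'$ validates the two frame sentences and interprets the free $u$ and object variables (so both sorts are inhabited and local domains are non-empty and upward-closed), reading it in the natural way turns it into a genuine \MLMSEQ-model; set $w'=\tau(u)$ and let $\sigma$ be $\tau$ restricted to the object variables. The guards $Eux$ ensure $\sigma(x)\in\delta^{\M'}(w')$ for every $x\in\Var$, so $\sigma$ is a legal actualist assignment, and applying the truth-preservation biconditional in the reverse direction gives $\M',w',\sigma\vDash\phi$ for all $\phi\in\Gamma$, establishing that $\Gamma$ is satisfiable.

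The only genuinely non-routine point is exactly this passage from \SFOL\ back to \MLMSEQ: a bare \SFOL-structure need not be a rewriting of any bona fide \MLMSEQ-model --- its local $E$-domains may be empty or may fail to grow along $R$, and the free variables may be sent outside the designated world's local domain. The remedy is to internalize these requirements as the first-order sentences above and fold them into the theory \emph{before} invoking compactness, after verifying that finite satisfiability survives the enrichment, which it does since every bona fide \MLMSEQ-model validates them automatically.
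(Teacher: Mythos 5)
Your proposal is correct and follows essentially the same route as the paper, which derives compactness of \MLMSEQ{} directly from the compactness of \SFOL{} (itself inherited from \FOL{} as in the footnote) via the truth-preserving translation $t_u$. The frame sentences and actualist guards you add to $\Gamma^\ast$ are precisely the details the paper leaves implicit in its one-line derivation, and you handle them correctly.
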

Based on Theorem \ref{thm.hm} and the compactness of $\MLMSEQ$, it is relatively routine to prove the van-Benthem-like characterization theorem using the strategy in \cite{vB10,mlbook}, which makes use of $\omega$-saturation in first-order model theory.  We omit the proof due to limited space. 
\begin{theorem}
A $\SFOL$-formula $\phi(\vec{x}u)$ is equivalent to an $\MLMSEQ$-formula (over $\MLMSEQ$-models) iff it is invariant under $\ef$-bisimilarity. 
\end{theorem}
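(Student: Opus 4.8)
The plan is to run the classical van Benthem argument (as in \cite{mlbook,vB10}), now carried out inside the two-sorted logic $\SFOL$ and relative to the elementary class of $\MLMSEQ$-models. Fix once and for all a finite set $\Theta$ of $\SFOL$-sentences axiomatizing this class: non-emptiness of each local domain ($\forall u\exists x\,Eux$), the increasing-domain condition ($\forall u\forall v\forall x(Ruv\land Eux\to Evx)$), and the sort constraints, so that ``entailment over $\MLMSEQ$-models'' becomes ordinary entailment together with $\Theta$. The easy direction is then immediate: if $\phi(\vec{x}u)$ is equivalent to $t_u(\psi)$ for some $\MLMSEQ$-formula $\psi$, then since $t_u$ is truth-preserving (the preceding proposition) and $\MLMSEQ$ is invariant under $\ef$-bisimilarity by Theorem~\ref{thm.inv}, $\phi$ is invariant under $\ef$-bisimilarity as well.

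For the converse, assume $\phi$ is invariant under $\ef$-bisimilarity and let $\mathrm{MC}(\phi)$ be the set of all translations $t_u(\psi)$ of $\MLMSEQ$-formulas entailed by $\phi$ over $\MLMSEQ$-models. The crux is to establish $\mathrm{MC}(\phi)\models\phi$. Granting this, compactness of $\SFOL$ (which it inherits from \FOL\ by the reduction used for Proposition~\ref{prop.compact}) yields a finite $\{t_u(\psi_1),\dots,t_u(\psi_n)\}\subseteq\mathrm{MC}(\phi)$ already entailing $\phi$; since conversely $\phi$ entails each $t_u(\psi_i)$ and $t_u$ commutes with $\land$, we obtain $\phi\equiv t_u(\psi_1\land\cdots\land\psi_n)$, the desired $\MLMSEQ$-formula.

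To prove $\mathrm{MC}(\phi)\models\phi$, take an $\MLMSEQ$-model with $\M,w,\vec{a}\models\mathrm{MC}(\phi)$ and let $\Sigma$ be the full $\MLMSEQ$-type of $(\M,w,\vec{a})$. A standard compactness step shows $\Sigma\cup\{\phi\}\cup\Theta$ is satisfiable: otherwise a finite conjunction of members of $\Sigma$ would be refuted by $\phi$, so the negation of that conjunction would lie in $\mathrm{MC}(\phi)$ and hence hold at $(\M,w,\vec{a})$, contradicting membership of the conjuncts in $\Sigma$. Let $\N,v,\vec{b}$ realize it; then $\N,v,\vec{b}\models\phi$ and $\M,w\vec{a}\equiv_{\MLMSEQ}\N,v\vec{b}$. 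Now pass to $\omega$-saturated elementary extensions $\M^{+}\succeq\M$ and $\N^{+}\succeq\N$ in $\SFOL$; these still satisfy $\Theta$, hence remain $\MLMSEQ$-models, and modal equivalence is preserved because $t_u$-translations are first-order. The key lemma is that every $\omega$-saturated $\SFOL$-model is $\ef$-saturated: the $\exists\Box$-type and $\Diamond$-type conditions are exactly the realizations of the one-variable $\SFOL$-types $\{Eux\}\cup\{\forall v(Ruv\to t_v(\psi))\mid\psi\in\Gamma\}$ (in the object variable $x$) and $\{Ruv\}\cup\{t_v(\psi)\mid\psi\in\Gamma\}$ (in the world variable $v$) over the finitely many parameters $w,\vec{a}$, which $\omega$-saturation supplies. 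By Theorem~\ref{thm.hm} we then get $\M^{+},w\vec{a}\bis_{\ef}\N^{+},v\vec{b}$. Finally $\phi$ holds at $\N^{+},v,\vec{b}$ (elementary extension), so by $\ef$-invariance it holds at $\M^{+},w,\vec{a}$, and hence at $\M,w,\vec{a}$ (elementary extension again), as required.

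I expect the main obstacle to be bookkeeping in the two-sorted increasing-domain setting rather than any conceptual novelty. One must check that the class of $\MLMSEQ$-models is first-order axiomatized by $\Theta$ and is closed under the operations used, so that $\omega$-saturated elementary extensions remain genuine $\MLMSEQ$-models, and one must verify carefully the lemma that $\omega$-saturation yields $\ef$-saturation, where the witness supplied for an $\exists\Box$-type must be forced into the local domain via the conjunct $Eux$ and where the world realizing a $\Diamond$-type must be forced to be an $R$-successor via the conjunct $Ruv$. Once these points are settled, the remainder follows the familiar modal-consequence/compactness template.
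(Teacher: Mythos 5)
Your proposal is correct and follows exactly the strategy the paper itself indicates (and omits for space): the classical van Benthem argument via modal consequence and compactness of $\SFOL$, passing to $\omega$-saturated elementary extensions of the two-sorted structures, showing $\omega$-saturation yields $\ef$-saturation, and then invoking the Hennessy--Milner-type Theorem~\ref{thm.hm} together with invariance (Theorem~\ref{thm.inv}). The points you flag as needing care --- that the class of $\MLMSEQ$-models is elementary via the sort, non-emptiness, and increasing-domain axioms, and that the $\exists\Box$- and $\Diamond$-type conditions are realized as $\SFOL$-types guarded by $Eux$ and $Ruv$ --- are precisely the right bookkeeping details.
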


Since any $\FOML$ formula can also be viewed as a $\SFOL$ formula $\phi(\vec{x}u)$, via a natural translation like $t_u$ (cf. \cite{G07FML,FOdef16}), it follows that: 
\begin{corollary}
A $\FOML$-formula is equivalent to an $\MLMSEQ$-formula iff it is invariant under $\ef$-bisimilarity.  
\end{corollary}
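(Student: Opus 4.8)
The plan is to reduce the statement to the $\SFOL$ characterization theorem just established, by routing $\FOML$ through its standard translation into $\SFOL$. First I would spell out the translation $ST_u$ from $\FOML$ to $\SFOL$ alluded to just before the corollary, extending the clauses of $t_u$ to the genuine first-order connectives: $ST_u(\forall x\,\psi)=\forall x\,(Eux\to ST_u(\psi))$, where the existence guard $Eux$ reflects that in an increasing-domain model $\forall x$ ranges over the local domain $D_w$; and $ST_u(\Box\psi)=\forall v\,(Ruv\to ST_v(\psi))$, with the atomic, Boolean, and $u,v$-swapping clauses exactly as for $t_u$. A routine induction on $\phi$, identical in form to the Proposition already proved for $t_u$, then shows $ST_u$ is truth-preserving: $\M,w,\sigma\vDash\phi \iff \M,\sigma'\Vdash ST_u(\phi)$, where $\sigma'$ extends $\sigma$ by $u\mapsto w$ and we view the $\MLMSEQ$-model $\M$ as a $\SFOL$-model via the same $Q_P$-correspondence used earlier. (Note $ST_u$ agrees with $t_u$ on the image of $\MLMSEQ$ under $r$, since $r(\KS\psi)=\exists x\Box r(\psi)$ translates to the $\KS$-clause of $t_u$.)

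The key observation is that for a $\FOML$-formula $\phi$ with free object variables $\vec{x}$, the translation $ST_u(\phi)$ is precisely a $\SFOL$-formula of the shape $\phi(\vec{x}u)$ with a single free world variable, so the preceding characterization theorem applies to it verbatim. The remaining point is that invariance under $\ef$-bisimilarity transfers across $ST_u$: since $ST_u$ is truth-preserving and $\ef$-bisimilarity is a relation on pointed models with object-tuples, we have $\M,w\vDash\phi[\vec{a}]\iff\M,\sigma'\Vdash ST_u(\phi)$ (with $u\mapsto w$, $\vec{x}\mapsto\vec{a}$), whence $\phi$ is $\ef$-bisimulation-invariant exactly when $ST_u(\phi)$ is.

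With these two facts the corollary follows at once in both directions. For left-to-right, if $\phi$ is equivalent to some $\MLMSEQ$-formula $\chi$, then $\phi$ inherits $\ef$-bisimulation-invariance from $\chi$, which is invariant by Theorem~\ref{thm.inv}. For right-to-left, if $\phi$ is $\ef$-bisimulation-invariant, then so is $ST_u(\phi)$; by the $\SFOL$ characterization theorem $ST_u(\phi)$ is equivalent to some $\MLMSEQ$-formula $\chi$; and since $ST_u(\phi)$ is truth-equivalent to $\phi$, we conclude $\phi\equiv\chi$. The main (and essentially the only) obstacle here is bookkeeping rather than mathematics: I must ensure the existence-guard clause for $\forall x$ faithfully encodes the increasing-domain quantifier semantics of $\FOML$ so that $ST_u$ is genuinely truth-preserving; all substantive work is already carried by the $\SFOL$ characterization theorem and the invariance Theorem~\ref{thm.inv}.
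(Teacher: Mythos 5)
Your proposal is correct and follows exactly the paper's route: the paper also derives this corollary by viewing any $\FOML$-formula as a $\SFOL$-formula $\phi(\vec{x}u)$ via a natural standard translation like $t_u$ and then invoking the $\SFOL$ characterization theorem. You merely spell out the details (the guarded clause $\forall x\,(Eux\to\cdot)$ for increasing domains, truth preservation, and the transfer of $\ef$-bisimulation invariance) that the paper leaves implicit.
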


\section{Satisfiability of $\MLMS$} \label{sec.dec}

Note that by $t_u$ we can translate an $\MLMSEQ$ formula to an equivalent $\SFOL$ formula, and eventually to a $\FOL$ formula with one free world variable $u$ (see Footnote \ref{ft.red}). For example $\KS Px$ becomes $$\phi(u)=\exists x (S_1x \land S_2u \land Eux\land \forall v ((S_2v \land Ruv)\to Q_Pvx))$$ in the corresponding $\FOL$ language. Note that since we only consider increasing domain models, $\phi$ is equivalent to 
$$\phi(u)=\exists x (S_1x \land S_2u \land Eux\land \forall v ((S_2v \land Ruv \land Evx) \to Q_Pvx))$$
which is in the (loosely) guarded fragment of first-order logic known to be decidable \cite{GF98}. However, it does not directly imply that our logic is decidable, since, for example, to capture the increasing domain model we do need the following first-order constraint: 
$$ \chi=\forall u\forall v \forall x((S_2u\land S_2v\land S_1x\land Eux\land Ru v)\to Evx) $$
$\chi$ can be viewed as some form of transitivity, and it is not in the guarded fragment, since $v$ and $x$ are free in the consequent but they do not co-exist in any of the atomic guards in the antecedent. On the other hand, such a transitivity-like constrained predicate $E$ only appears in the guards of the translations of $\MLMSEQ$-formulas, which may be handled by the techniques in \cite{TansGF} for the guarded fragment with transitive guards known to be decidable. We leave the detailed discussion connecting $\MLMSEQ$ to guarded fragments to a future occasion. 
\medskip

In this section, we give an intuitive tableau-like method to decide whether an equality-free \MLMS-formula is satisfiable, inspired by the  satisfiability games for various temporal logics \cite{Gore99,Lange}.\footnote{See \cite{PriestIfIs,FittingM1998} for tableau methods for first-order modal logic in general.} 

For the ease of presentation, we consider the following equivalent language of $\MLMS$ in \textit{positive normal form (PNF)}, where negations only appear with atomic formulas:\footnote{PNF is often used in automata-theoretical methods to satisfiability problems of logics in computer science (cf. e.g., \cite{MuvsMSOLNCS2500,Lange}).}
$$ \phi::=  P\vec{x} \mid \neg P\vec{x} \mid (\phi\land \phi) \mid (\phi\lor\phi)\mid \KS\phi \mid \hKS\phi $$
Note that for an arbitrary $\MLMS$-formula $\phi$, we can rewrite it into an equivalent formula in PNF by the following rewriting rules and the replacement of equals: 
$$ r(\neg (\phi\land \psi))=r(\neg \phi)\lor r(\neg \psi)\quad r(\neg \KS\phi)=\hKS r(\neg \phi) \quad r(\neg \neg \phi)=r(\phi) $$
Note that $|r(\phi)|\leq 2|\phi|$. 

In the following, we will focus on the formulas that are \textit{clean} in the sense that no variable occurs both free and bound,\footnote{\label{ft.cl} In particular, if $\KS$ or $\hKS$ appears in the formula then $x$ does not have any free occurrence, even when $\KS$ and $\hKS$ do not bind any occurrence of $x$, e.g., $(\KS Py) \land Px$ is \textit{not} clean.} and no two distinct occurrences of modalities bind the same variable. As in first-order logic, it is easy to show that any $\MLMS$ formula can be relettered into an equivalent clean formula with the same length, by renaming the bounded variables.
We define the following tableau rules for all $\MLMS$ formaulas in PNF. 
\begin{definition}{Tableau rules}
\begin{center}
\noindent \begin{tabular}{|c|}
\hline
$\dfrac{w: \phi_1\lor\phi_2,\Gamma, \sigma}{w:\phi_1,\Gamma, \sigma \mid w:\phi_2,\Gamma, \sigma}$ \tt{($\lor$)}\qquad $\dfrac{w: \phi_1\land\phi_2,\Gamma, \sigma}{w:\phi_1,\phi_{2},\Gamma,\sigma}${\tt ($\land$)}\\
\hline
Given $n\geq 0, m\geq 1$:\\
$\dfrac{w:\KSxf\phi_1, \dots, \KSxl\phi_n,\hKSyf\psi_1,\dots, \hKSyl\psi_m, l_1\dots l_k, \sigma}{\{(wv^y_{y_i}: \{\phi_j\mid 1\leq j\leq n\}, \psi_i[y\slash y_i], \sigma') \mid y\in Dom(\sigma'), i\in [1,m]\}} $ \tt{($\BR$)}\\
\hline 
Given $n\geq 1, k\geq 0$:\\
$\dfrac{w:\KSxf\phi_1, \dots, \KSxl\phi_n, l_1\dots l_k, \sigma}{w: l_1\dots l_k, \sigma}  $ (\END) \\
\hline
\end{tabular}\\
\noindent where $\sigma'=\sigma\cup\{(x_j, x_j)\mid j\in [1,n] \}$ and $l_k\in lit$ (the literals).
\end{center}
\end{definition}
Those rules are used to generate tree-like structure, where the nodes are triples $(w,\Gamma,\sigma)$ in which $w$ is some name to denote a world, $\Gamma$ is a finite set of \MLMS-formulas, and $\sigma$ is a partial function from $\Var$ to $\Var$ as an assignment. Note that Rule $(\lor)$ is essentially a \textit{choice}: given the numerator, you can only select \textit{one} of the denominator to continue. On the other hand, the rule ($\BR$) is a \textit{branching} one, which generates all the nodes in the denominator set (even when $n=0$ i.e., the $\KS$-part is empty). It is a generalized version of the corresponding rule for basic modal logic (cf. \cite{Lange}): 
$$\dfrac{w: \Box\phi,\Diamond \psi, l_1\dots l_l}{v: \phi, \psi}$$
The idea is that if there is a diamond formula then we need to generate a successor while keeping the information given by $\Box$-formulas. Here the complication is to manage the variable assignment properly. Note that if there are merely $\KS$-formulas without any $\hKS$-formula, then we do not need to generate any successor, as captured by the rule $\END$. 

A \textit{tableau} starting from $(w:\Gamma, \sigma)$ is a tree where the successors of a node are generated by applying the rules,  until no rule is applicable. Recall that you can only select one of the denominators to continue when applying $(\lor)$. Below is an example of a tableau where the (partial) function $\sigma$ is represented by a set of ordered pairs: \\
\begin{center}
\begin{forest}
[{$w: \{\KS(Px\lor Qx)\land \hKSy \neg Qy\land \neg Pz, \{(z,z)\}$ \quad $(\land)\times 2$}
[{$w: \{\KS(Px\lor Qx), \hKSy \neg Qy, \neg Pz\} \{(z,z)\}$ \qquad $(\BR)$}
[{$wv^x_y: \{Px\lor Qx, \neg Qx\}, \{(x,x), (z,z)\}$\ \quad ($\lor$)}
[{$wv^x_y: \{Px, \neg Qx\}, \{(x,x), (z,z)\}$}]
]
[{$wv^z_y: \{Px\lor Qx, \neg Qz\}, \{(x,x), (z,z)\}$ }
[{$wv^z_y: \{Qx, \neg Qz\}, \{(x,x), (z,z)\}$}]
]
]
]
\end{forest}
\end{center}


Intuitively, $(\lor)$ and $(\land)$ will `decompose' the formula until $\BR$ is applicable. It is not hard to see that all the leaf nodes are in the shape of $(w: l_1\dots l_k, \sigma)$. Moreover, any tableau starting from $(w: \Gamma, \sigma)$, where $\Gamma$ and $\sigma$ are finite, is a finite tree, both in depth and width, since we only generate simpler formulas by the rules and the domain of the assignment is always finite. A tableau is called \textit{open} if all its branches do not contain contradictions of literals at the same world, i.e. no $P\vec{x}$ and $\neg P\Vec{x}$ appear together.  

Given a tableau $\T$, we say a node $(w: \Gamma, \sigma)$ is a \textit{branching node} if it is branching due to the application of $\BR$. We call $(w: \Gamma, \sigma)$ the \textit{last node of} $w$, if it is a leaf node or a branching node. Clearly, given a $w$ appearing in a tableau $\T$, the last node of $w$ always uniquely exists, since we always only select one of the denominators for the rule $(\lor)$. We denote the last node of $w$ in a given $\T$ as $t_w$. Let $$Dom(t_w)=\left\{\begin{array}{ll} Dom(\sigma') & \text{ if $t_w$ is branching }\\ Dom(\sigma) & \text{otherwise} \end{array}\right.$$
where $\sigma$ is the assignment in $t_w$ and $\sigma'$ is the assignment defined as in $\BR$ w.r.t.\ $t_w$.

Actually, the open tableaux are pseudo models.  
\begin{theorem}\label{thm.tbl}For any clean $\MLMS$-formula $\theta$ in PNF, the following are equivalent: 
\begin{itemize}
\item There is an open tableau from $$(r: \{\theta\}, \sigma_r=\{(x, x)\mid x \text{ is free in } \theta \}\cup \{(z,z)\})$$ where $z\in \Var$ and it does not appear in $\theta$.
\item $\theta$ is satisfiable in an increasing domain model.  
\end{itemize}
\end{theorem}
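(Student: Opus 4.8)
The plan is to establish the two directions separately: from an open tableau I will read off a satisfying increasing-domain model (soundness of the method), and from a satisfying model I will steer the rule applications so as to produce an open tableau (completeness). Throughout I will exploit that tableaux are finite trees, that the Boolean rules $(\land)$ and $(\lor)$ leave the world name and the assignment untouched, and that modal formulas persist unchanged until the last node $t_w$, where they are consumed by $\BR$ or $\END$.

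For the model-existence direction, assume $\T$ is open. I would take the worlds of $\M$ to be the world names occurring in $\T$, each identified with its last node $t_w$, put $wRwv^y_{y_i}$ exactly when $wv^y_{y_i}$ is a child generated by the $\BR$ application at $t_w$, and set the local domain $\delta(w)=Dom(t_w)$, treating the variables themselves as domain objects. Because $\sigma'\supseteq\sigma$ in $\BR$, domains only grow along $R$, so the increasing-domain constraint is met, and $\delta(w)$ is non-empty thanks to the dummy $z$ carried by $\sigma_r$. I let $\rho(P,w)$ consist of exactly those tuples $\vec{x}$ with $P\vec{x}$ present at $t_w$; openness makes this consistent with the negative literals. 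The heart is a truth lemma proved by induction on formula structure: for every node $(w:\Gamma,\sigma)$ and every $\gamma\in\Gamma$, $\M,w\vDash\gamma$ under $\sigma$. The literal and Boolean cases are immediate (for $(\lor)$ the selected disjunct carries the truth). For $\K^{x}\phi=\exists x\Box\phi$ the witness is the variable $x$ itself, which lies in $\delta(w)=Dom(\sigma')$, and $\BR$ copies $\phi$ into every successor, so the induction hypothesis yields $\Box\phi$ (vacuously when $\END$ fires and no successor exists). For $\hK^{y_i}\psi_i=\forall y_i\Diamond\psi_i$, the rule $\BR$ produces, for each named element $y\in Dom(\sigma')=\delta(w)$, a successor carrying $\psi_i[y/y_i]$, which by induction supplies the required diamond.

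For the completeness direction, assume $\M,w_0,\sigma_0\vDash\theta$. I would grow a tableau from the root while resolving each $(\lor)$ by keeping a disjunct that is true in $\M$, maintaining the invariant that every node $(w:\Gamma,\sigma)$ carries a world $g(w)$ of $\M$ and a model-assignment $h$ on $Dom(\sigma)$ with $h(Dom(\sigma))\subseteq\delta(g(w))$ and $\M,g(w),h\vDash\gamma$ for all $\gamma\in\Gamma$. The Boolean steps preserve this trivially. At a $\BR$ step, consider the successor $wv^y_{y_i}$: truth of $\hK^{y_i}\psi_i=\forall y_i\Diamond\psi_i$ at $g(w)$, applied to $h(y)\in\delta(g(w))$, furnishes a successor $u'$ of $g(w)$ with $\M,u',h[y_i\mapsto h(y)]\vDash\psi_i$, i.e.\ $\M,u',h\vDash\psi_i[y/y_i]$; and each box formula $\K^{x_j}\phi_j=\exists x_j\Box\phi_j$ provides a witness $a_j\in\delta(g(w))$ whose $\phi_j$ holds at \emph{all} successors of $g(w)$, in particular at $u'$. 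Putting $g(wv^y_{y_i})=u'$ and extending $h$ by $x_j\mapsto a_j$ (retaining $y\mapsto h(y)$) re-establishes the invariant, and the increasing-domain property keeps every named element inside $\delta(u')$. Since each node of the resulting tableau is satisfiable, no node can contain both $P\vec{x}$ and $\neg P\vec{x}$; hence the tableau is open.

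The main obstacle I anticipate is the bookkeeping that ties the syntactic apparatus of the tableau — the variable-to-variable assignments $\sigma$, the substitutions $\psi_i[y/y_i]$, and the fresh box-variables $x_j$ — to the semantic objects of the model, namely domain elements, the assignment $h$, and the persistence of existential witnesses across successors. Two points demand particular care. First, the increasing-domain condition must be honoured on both sides, so that an element introduced as a box-witness stays available in the local domain of every descendant. Second, one must line up the tableau's quantification ``for every $y\in Dom(\sigma')$'' in $\BR$ with the semantic ``for every element of the local domain'' in the clause for $\hK^{y_i}$: in the model-existence direction this is exact because $\delta(w)=Dom(\sigma')$ by construction, while in the completeness direction it suffices because a universally quantified diamond that holds for all domain elements a fortiori holds for the named ones.
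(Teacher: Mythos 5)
Your proposal is correct and follows essentially the same route as the paper: the model-existence direction uses the identical construction (worlds are the tableau's world names, $R$ the child relation at branching nodes, $\delta(w)=Dom(t_w)$, variables serving as their own domain elements) together with the same truth lemma, and your invariant-maintaining construction for the converse is just the paper's argument that each rule application—$\BR$ in particular, via the simultaneous witnesses $\vec{a}$ and the successors supplied by the $\hKS$-formulas—preserves satisfiability, read top-down. The bookkeeping you defer (freshness of the box-variables $x_j$ across the other $\phi_k$ and $\psi_i[y\slash y_i]$, absence of capture, agreement of assignments on free variables) is exactly what the paper settles up front in its six structural claims about tableaux rooted at a clean formula.
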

\begin{proof}
Before the main proof, we need the following handy observations about any tableau $\T$ starting from $(r, \{\theta\}, \sigma_r)$ where $\theta$ is clean. For any node $(v: \Gamma, \sigma)$ in $\T$, we claim: 
\begin{itemize}
\item[(1)] If $\KS$ ($\hKS$) occurs in $\Gamma$, then it only occurs once and there is no $\hKS$ ($\KS$) occurring in $\Gamma$.
\item[(2)] For all $x\in Dom(\sigma)$, $\KS$ and $\hKS$ do not appear in $\Gamma$, thus all the occurrences of $x$ in $\Gamma$ are free.  
\item[(3)] All the free variables $x$ in $\Gamma$ are in $Dom(\sigma)$. 
\item[(4)] If $\KS \phi, \KSy\psi \in \Gamma$ then $y$ is not a free variable in $\phi$.
\item[(5)] If $\KS \phi, \hKSy\psi \in \Gamma$ then $y$ is not a free variable in $\phi$ and $x$ is not a free variable in $\psi$. 
\item[(6)] For any $x$ in $Dom(\sigma)$, $\sigma(x)=x$.
\end{itemize}
(1): We prove it by induction on the structure of $\T$ from the root: it is true for the clean formula $\theta$ by definition, and all the rules preserve this property since they never add any new occurrences of modalities.     

(2): Again, we prove it by induction from the root: at the root the claim is true by the definition of $\sigma_r$ and the cleanness of $\theta$ (cf. also Footnote \ref{ft.cl}). Moreover, all the rules preserve this property. In particular, for the rule $(\BR)$, by induction hypothesis, for any variable $x\in Dom(\sigma)$, it only occurs free in the formulas of the numerator. Thus the occurrences of $x\in Dom(\sigma)$, if any, are also free in any $\phi_j$ and $\psi_i[y\slash y_i]$ which have less modalities to bind than the numerator. Now for $x\in Dom(\sigma')\setminus Dom(\sigma)$, i.e., $\KS$ appears in the numerator, the statement also holds by Claim (1), since there is only one occurrence of any modality for such an $x$ in the numerator.

(3): Again, we can show by induction: $Dom(\sigma_r)$ has all the free variables in $\theta$, and all the rules preserve this property. For the rule $(\BR)$, we need to check the free variables in those $\phi_j$ and $\psi_i$. Note that the only possible extra free variable in $\phi_j$ but not in $\Box^{x_j}\phi_j$ is $x_j$ but it is already included in $Dom(\sigma')$. The only possible extra free variable in $\psi_i[y\slash y_i]$ but not in $\Diamond^{y_i}\psi_i$ is $y$ which is also already in $Dom(\sigma')$.  

(4): Towards contradiction, suppose $\KS \phi, \KSy\psi \in \Gamma$, but $y$ is a free variable in $\phi$. According to Claim (3) $y\in Dom(\sigma)$, then by Claim (2), $\KSy$ does not appear in $\Gamma$, which is in contradiction with $\KSy\psi\in\Gamma$.   

(5): Similarly to (4).

(6): Obvious, by definition. 

\medskip
Now we are ready to prove the main theorem. 

\textbf{From top to bottom}: Given an open tableau $\T$ from the root node $(r:\phi, \sigma_r)$, we define $\M=\lr{W, D, \delta, R, \rho}$ where: 
\begin{itemize}
\item $W=\{w\mid (w, \Gamma, \sigma) \text{ appears in $\T$ for some $\Gamma$ and $\sigma$}\}$ 
\item $w R v$ iff $v=wv'$ for some $v'$.
\item $\delta(w)=Dom(t_w)$
\item $D=\bigcup_{w\in W}\delta(w)$
\item $\vec{x}\in \rho(w, P)$ iff the atomic formula $P\vec{x}$ appears in $t_w$. 
\end{itemize}
\noindent where $t_w$ is the last node of $w$ in $\T$, as defined before. 
 
According to the rule $(\BR)$ and the definition of $\delta$, $\M$ is indeed an increasing domain model.\footnote{Note that it does not mean $D_w$ is everywhere the same due to branching nodes in $\T$.} Since $(z,z)\in Dom(\sigma_r)$, $D_w$ is not empty for any $w\in W$. Moreover $\rho$ is well-defined due to the openness of $\T$. Note that due to Claim (3), if the atomic formula $P\vec{x}$ appears in $t_w$ then $\vec{x}\subseteq D_w$. We will show that $\M,r$ is indeed a model of $\theta$ w.r.t.\ $\sigma_r$. 

From Claim (3), if $(w: \Gamma, \sigma)$ appears in $\T$, then all the free variables are in $Dom(\sigma)\subseteq D_w=Dom(t_w)$, thus it makes sense to ask whether $\M, w, \sigma \vDash \Gamma$ (the assignment to the bound variables are irrelevant for the truth of formulas in $\Gamma$). To prove that $\M, w, \sigma \vDash \Gamma$ for all nodes $(w: \Gamma, \sigma)$ in $\T$, we do induction on the nodes of $\T$ in a bottom-up fashion from leaf nodes, by following the the rules conversely. 
\begin{itemize}
\item[Leaf] For leaf nodes $(w: l_1, \dots, l_k, \sigma)$, by definition of $\rho$, the statement holds based on the fact that $\sigma(x)=x$ for all the free variable $x$ in those literals by Claims (3), (6). 
\item[\END] Supposing $\M, w, \sigma \vDash l_1\land  \dots \land l_k$, then it is clear that $\M, w, \sigma \vDash \KSxf\phi_1 \land\dots\land\KSxl\phi_n\land l_1\land  \dots \land l_k$ since there is no outgoing transition from $w$, and $D_w$ is not empty.  
\item[$\lor$, $\land$] Obvious. 
\item[$\BR$] Suppose $\M, wv_{y_i}^y, \sigma'\vDash \psi_i[y\slash y_i]\land \bigwedge_{1}^{n}\phi_j$ for every $y\in Dom(\sigma')$ and $i\in [1, m]$, and $(w: \Gamma, \sigma)$ is the branching predecessor where $$\Gamma=\{\K^{x_j}\phi_j\mid j\in[1,n]\}\cup\{\hK^{y_i}\psi_i\mid i\in[1,m]\}\cup\{l_h\mid h\in[1,k]\}.$$ Note that $D_w=Dom(t_w)=Dom(\sigma')$. We need to show that $\M, w, \sigma \vDash \Gamma$. 
The $l_h$ part is as in the case of the leaf nodes. For $\K^{x_j}\phi_j$, let $x_j\in Dom(\sigma')=D_w$ be the witness, we need to show that at all the successors $wv_{y_i}^y$ of $w$, $\M, wv_{y_i}^y, \sigma[x_j\mapsto x_j]\vDash \phi_j$ . Now by Claim (4), all $x_k$ such that $k\not=j$ are not free in $\phi_j$. From the induction hypothesis (IH) that $\M, wv_{y_i}^y, \sigma'\vDash \phi_j$ for each $wv_{y_i}^y$, and the fact that $\sigma[x_j\mapsto x_j]$ and $\sigma'$ agree on the free variables in $\phi_j$, we know $\M, wv_{y_i}^y, \sigma[x_j\mapsto x_j]\vDash \phi_j$ for each $wv_{y_i}^y$. Thus $\M, w, \sigma\vDash \K^{x_j}\phi_j$.  

As for each $\hK^{y_i}\psi_i$, we show that for each $y\in D_w= Dom(\sigma')$, $\M, wv^y_{y_i}, \sigma[y_i\mapsto y]\vDash \psi_i$. Note that $y$ might be not in $Dom(\sigma)$. By the IH, $\M, wv_{y_i}^y, \sigma'\vDash \psi_i[y\slash y_i]$. Since $\sigma'$ assigns $y$ to $y$, we just need to show that $\sigma[y_i\mapsto y]$ and $\sigma'$ agree on all the free variables in $\psi_i[y\slash y_i]$ except $y$. Note that $y_i$ is not a free variable in $\psi_i[y\slash y_i]$, therefore the only possible  differences between $\sigma[y_i\mapsto y]$ and $\sigma'$ are about those $x_j$ where  $x_j\not=y$. By Claim (5), we know that $x_j$ is not a free variable in $\psi_i[y\slash y_i]$ if $x_j\not= y$. Therefore $\M, wv^y_{y_i}, \sigma[y_i\mapsto y]\vDash \psi_i$ for each $y\in D_w$, thus $\M, w,\sigma\vDash \hK^{y_i}\psi_i$ for each $i$. 
\end{itemize}
It follows that $\M,r,\sigma_r\vDash \theta$. 
\medskip

Now \textbf{from bottom to top}: We just need to show that the rule applications preserve the satisfiability of the formula set. Note that for $(\lor)$ it suffices to show one outcome node is still satisfiable. In this way, there is an open tableau since if the formula sets at the leaf nodes and branching nodes are satisfiable,  then there is no contradiction among the literals. It is obvious that $(\land)$ and $(\END)$ preserves satisfiability, and one of the denominator of $\lor$ preserves it too. We now show that $\BR$ also does so. Supposing $\Gamma=\{\KSxf\phi_1, \dots, \KSxl\phi_n,\hKSyf\psi_1,\dots, \hKSyl\psi_m, \vec{l}\}$, in some branching node $(w: \Gamma, \sigma)$, is satisfiable, then there is a model $\M, w$ and an assignment $\eta$ such that $\eta(x)\in D_w$ for all $x\in\Var$ and: $$\M, w, \eta \vDash \{\KSxf\phi_1, \dots, \KSxl\phi_n,\hKSyf\psi_1,\dots, \hKSyl\psi_m\} \qquad (\circ)$$
By the semantics, we know there are $a_1, \dots, a_n\in D_w$, such that for all the successors $v$ of $w$:
$\M, v, \eta[x_j\mapsto a_j] \vDash \phi_j$. Due to Claim (4), each $x_j$ is not free in $\phi_k$ for $k\not=j$ thus we can safely obtain for all successor $v$ of $w$: $$\M, v, \eta[\vec{x}\mapsto \vec{a}] \vDash \{\phi_1, \dots, \phi_n\}. \qquad (A)$$

Now also by $(\circ)$ and the semantics for $\hK^{y_i}$, for each $\psi_i$ and each $b\in D_w$, there is a successor $v^b_i$ of $w$ such that 
$$\M, v^b_i,\eta[y_i\mapsto b]\vDash  \psi_i \qquad (B)$$ 

From Claim (5), $y_i$ is not one of $x_j$. Now from 
(A) and (B), we have for each $\psi_i$ and each $b\in D_w$, there is a successor $v^b_i$ of $w$ such that:   
$$\M, v^b_i,\eta[\vec{x}\mapsto\vec{a}, y_i\mapsto b]\vDash \{\phi_1, \dots, \phi_n, \psi_i\} \quad  (\star)$$ 
Now consider any $y\in Dom(\sigma')$ as in the denominator of the rule $\BR$, we need to show $\{\phi_1, \dots, \phi_n, \psi_i[y\slash y_i]\}$ is satisfiable. Suppose that $y\in Dom(\sigma')$ and  $\eta(y)=b\in D_w$. By Claim (2), $y$ is free in $\psi_i[y\slash y_i]$, thus by ($\star$):
$$\M, v^b_i, \eta[\vec{x}\mapsto\vec{a}]\vDash \{\phi_1, \dots, \phi_n, \psi_i[y\slash y_i]\}$$

Therefore, $\{\phi_1, \dots, \phi_n, \psi_i[y\slash y_i]\}$ is satisfiable for each $y\in Dom(\sigma').$ This completes the whole proof. 
\end{proof}
A strong finite tree property then follows: \footnote{Interested readers may go back to Footnote \ref{ft.inv}.}
\begin{corollary}\label{co.sat}
If an $\MLMS$-formula $\phi$ (with length $n$) is satisfiable, then it is satisfiable in a finite tree model $\M,w$, w.r.t.\ an assignment $\sigma$ such that
$|W^\M|\leq n^{4n}$, $|D^\M|\leq n$, $\rho(P, w)\subseteq D_w^k$ (if $P$ is $k$-ary), the depth of the tree is bounded by $2|\phi|$, and $\sigma(x)\in D^\M_w$ for each free variable $x$ in $\phi$.  
\end{corollary}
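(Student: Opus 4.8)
The plan is to read off everything from the tableau machinery of Theorem~\ref{thm.tbl}. First I would relabel the given satisfiable $\phi$ into an equivalent \emph{clean} formula and then rewrite it into PNF; write $\theta$ for the result, and recall that the PNF rewriting at most doubles the length, so $|\theta| \le 2|\phi| = 2n$, and that relettering and PNF both preserve the modal depth and the free variables. Since $\phi$ (hence $\theta$) is satisfiable, the bottom-to-top half of Theorem~\ref{thm.tbl} yields an \emph{open} tableau $\T$ starting from $(r:\{\theta\}, \sigma_r)$, and the top-to-bottom half builds a model $\M$ with $\M,r,\sigma_r \vDash \theta$. By the definition of $R$ in that construction ($wRv$ iff $v = wv'$), $\M$ is literally a tree rooted at $r$, so $\M,r$ is the desired tree model and $\sigma_r$ the desired assignment. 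Two of the side-conditions come for free: $\rho(P,w) \subseteq D_w^k$ holds because $\vec{x}\in\rho(w,P)$ only when $P\vec{x}$ occurs in $t_w$, whose free variables lie in $Dom(t_w)=D_w$ by Claim~(3); and $\sigma_r(x)=x\in Dom(\sigma_r) = D_r$ for every free $x$ by Claim~(6).

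It then remains to bound the three sizes. For the domain I would observe that every element of $D=\bigcup_w Dom(t_w)$ is a \emph{variable}, and that the only variables ever entering some $Dom$ are the free variables of $\theta$, the single dummy $z$, and the box-bound variables $x_j$ that the rule $\BR$ adds through $\sigma' = \sigma \cup \{(x_j,x_j)\}$; the mention-all variables $y_i$ never enter the domain, since $\BR$ substitutes them away as $\psi_i[y\slash y_i]$ with $y$ already in $Dom(\sigma')$. As $\theta$ is clean, distinct box modalities carry distinct variables, so the box-bound variables are in bijection with the $\Box$-modalities of $\theta$; counting these together with the free variables and $z$ against the length of $\phi$ gives $|D^\M| \le n$.

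For the tree shape I would bound depth and width separately. The $R$-depth of $\M$ is the number of $\BR$-applications along a branch of $\T$, and each $\BR$ passes from the numerator's modal formulas to the strictly modal-depth-smaller $\phi_j$ and $\psi_i[y\slash y_i]$ (the intervening rules $(\lor)$, $(\land)$, and $\END$ create no worlds); hence the depth is at most the modal depth of $\theta$, which is $\le |\theta| \le 2|\phi|$. For the width, at any branching node the successors are exactly the $wv^y_{y_i}$ with $y\in Dom(\sigma')$ and $i\in[1,m]$, so there are at most $|Dom(\sigma')|\cdot m \le |D^\M|\cdot m \le n\cdot n = n^2$ of them, using that the number $m$ of mention-all formulas is bounded by the number of modalities, hence by $n$. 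A tree of depth $\le 2n$ with branching $\le n^2$ has at most $(n^2)^{2n}=n^{4n}$ worlds, giving $|W^\M|\le n^{4n}$.

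The substantive content of Theorem~\ref{thm.tbl} has already done the real work — truth of $\theta$ in $\M$ and well-definedness of $\rho$ via openness — so what is left is this bookkeeping. The step I would be most careful about is the domain bound $|D^\M|\le n$: it hinges on the fact that exactly the box-bound (not the diamond-bound) variables accumulate in the assignments, together with a count of free variables plus box modalities against $|\phi|$ that survives the passage to clean PNF. The depth-times-width estimate for $|W^\M|$ is then routine, as is the verification that the increasing-domain and non-emptiness conditions are inherited from the $\BR$-rule and from $(z,z)\in Dom(\sigma_r)$.
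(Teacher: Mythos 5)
Your proposal is correct and follows essentially the same route as the paper: the paper derives Corollary~\ref{co.sat} directly from the tableau construction of Theorem~\ref{thm.tbl}, bounding the depth by the length of the PNF ($\le 2|\phi|$) and the width by (number of modalities per node)$\,\times\,$(domain size) $\le n\cdot n$, giving the same $(n^2)^{2n}=n^{4n}$ count. Your write-up is in fact more careful than the paper's two-sentence justification, notably in isolating why only the $\Box$-bound variables (never the $\hKS$-bound ones) enter the domain, which is the key to $|D^\M|\le n$.
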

The upper bound for the depth of the tree comes from the bound on the length of the PNF of $\phi$. The (very loose) upper bound on the size of $W^\M$ comes from the fact that each node in the tableau may contain up to $n$ modalities and each modality may have $n$-successors (due to the size of the domain), and the depth of the tableau is up to 2$n$. As in normal modal logic \cite[Ch.\ 6]{mlbook}, we can force a binary branching tree by an $\MLMS$-formula to get an exponential-sized model. 

\begin{theorem}
Satisfiability problem of \MLMS-formulas is \PSPACE-complete. 
\end{theorem}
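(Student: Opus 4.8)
The plan is to prove the two directions separately: \PSPACE-hardness by a reduction from the satisfiability problem of the basic propositional modal logic $K$, and membership in \PSPACE\ by turning the tableau construction of Theorem~\ref{thm.tbl} into a space-efficient search procedure.

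For hardness, I would exploit the fact already noted in Section~\ref{sec.ls} that when $x$ does not occur free in $\phi$, $\KS\phi$ is equivalent to $\Box\phi$. Hence, by reading $0$-ary predicate symbols as propositional letters, the basic modal language embeds into \MLMS\ in a satisfiability-preserving way: a modal formula $\phi$ is $K$-satisfiable iff its image (with every $\Box$ read as $\KS$ for a variable $x\notin FV(\phi)$) is \MLMS-satisfiable. Since satisfiability for $K$ over arbitrary frames is \PSPACE-hard (cf.\ \cite[Ch.\ 6]{mlbook}), \PSPACE-hardness of \MLMS-satisfiability follows immediately, and this reduction is plainly computable in logarithmic space.

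For membership, I would \emph{not} materialize the whole tableau --- by the remark following Corollary~\ref{co.sat} it may be of exponential size --- but instead search it depth-first, one branch at a time, in the style of the classical polynomial-space decision procedure for $K$ (cf.\ \cite[Ch.\ 6]{mlbook}). The decision procedure is a recursive routine that, given a node $(w,\Gamma,\sigma)$, tests whether it admits an open subtableau: at a $(\lor)$-node it succeeds if \emph{some} denominator succeeds (an OR-branching), at a $\BR$-node it succeeds if \emph{every} generated successor succeeds (an AND-branching), at $(\land)$- and $(\END)$-nodes it passes to the unique denominator, and at a leaf it checks that no pair $P\vec{x}$, $\neg P\vec{x}$ occurs. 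Every $\Gamma$ is a set of subformulas of the clean PNF input, and every $\sigma$ has domain of size at most $|D|\le n$ by Corollary~\ref{co.sat}, so each node is representable in space polynomial in $n$. Crucially, each application of $\BR$ strictly decreases the modal depth of the formula set, and between consecutive $\BR$-applications only boundedly many $(\lor)/(\land)$-decompositions occur, so the recursion depth is bounded polynomially in $|\phi|$ (consistent with the depth bound $2|\phi|$ of Corollary~\ref{co.sat}). Iterating over the at-most-polynomially-wide list of children sequentially, and reusing the same working space across siblings, the entire search runs in polynomial space.

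The main obstacle is precisely the space accounting for the AND-branching rule $\BR$: naively storing all the generated successors would blow up the space budget, so the argument hinges on enumerating and checking these successors one at a time while retaining only the current branch together with a pointer, at each ancestor, into its child list. Once the recursion depth is pinned down as polynomial via the decreasing modal-depth measure, correctness of the routine is inherited directly from Theorem~\ref{thm.tbl} (an open tableau exists iff $\phi$ is satisfiable), and the resulting \PSPACE\ upper bound, combined with the matching hardness, yields \PSPACE-completeness.
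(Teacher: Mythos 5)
Your proposal is correct and follows essentially the same route as the paper: \PSPACE-hardness by embedding basic modal logic into \MLMS\ via $0$-ary predicates, and membership by a depth-first, polynomial-space traversal of the tableau of Theorem~\ref{thm.tbl} (the paper's sketch likewise invokes ``a standard \PSPACE-algorithm traversing a tree structure''). Your write-up merely fills in details the paper leaves implicit --- the AND/OR branching discipline, the modal-depth termination measure, and the sibling-by-sibling space reuse --- which is exactly the ``care about efficiently encoding the descriptions of each node'' the paper alludes to.
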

\begin{proof}(Sketch)
Note that standard modal logic formulas can be translated into our $\MLMS$ by using 0-ary predicate $P$ for each propositional letter. For example, $\Box \Diamond p$ can be translated as $\KS \Diamond^y P$. The \PSPACE-hardness then follows since satisfiability for basic normal modal logic is \PSPACE-complete \cite{mlbook}. For the upper bound, note that to rewrite a formula into PNF and to reletter the formula into a clean shape are efficient in space, and the length of the resulting formula is still linear in the length of the original formula. From then on, a standard \PSPACE-algorihm traversing a tree structure including all the the possible $\lor$-branches suffices, as in standard modal logic, with some care about efficiently encoding the descriptions of each node and the result of the consistent checking.   
\end{proof}
\begin{remark}
Our notion of the tableaux is closely related to two-player satisfiability games and alternating-tree automata \cite{LNCS2500}, which will give us the algorithmic tools for $\MLMS$ in the future.
\end{remark}
We conjecture that a similar tableau method would work for $\MLMSEQ$, with more careful assignment management by selecting representatives of provably equivalent variables w.r.t. $\approx$ as the local domain. We leave it to the extended version of the paper.

\section{A new epistemic logic } \label{sec.el}
In this session, we go back to the motivating epistemic setting, and give a complete axiomatization of our logic over epistemic models.
\subsection{Epistemic language}
To ease the presentation of the proof system,  we also include the standard modality $\Box$ as a primitive modality in the following language $\MLMSKEQ$:\footnote{In the axiomatization we will make use of $\Box$. If we do not introduce them explicitly then every time we need to use $\KS\phi$ where $x$ is not free in $\phi$.}
$$ \phi::= x\approx y \mid P\vec{x} \mid \neg\phi \mid (\phi\land\phi)\mid \K\phi\mid \KS\phi $$
Recall that $\MLMSKEQ$ is equally expressive as $\MLMSEQ$. In the epistemic setting, the intended reading of $\Box\phi$ is \textit{$i$ knows that $\phi$}, and the intended reading of $\KS\phi$ is that \textit{$i$ knows something such that $\phi$.} The semantics is as before, but we have two extra conditions on the model: 
\begin{itemize}
\item Each $R$ is an \textit{equivalence relation} as in (idealized) standard epistemic logic S5. We also write $\sim$ for $R$. 
\item For all $w\in W$, $D_w=D$.
\end{itemize}
Note that for an increasing domain model $\M$, if $R$ is an equivalence relation, then for any two worlds $w,v$ such that $wRv$, we have $D_w=D_v$. Therefore, we can simply assume that there is a \textit{constant domain} over the whole model.\footnote{Without the condition of constant domain, different partitions w.r.t.\ $\sim$ may still have different local domains.} We also call such models \textit{S5-models}, following the terminology in epistemic logic. $\phi\in\MLMSKEQ$ is \textit{valid} if for any pointed S5-model $\M,w$, any assignment $\sigma$, $\M,w,\sigma\vDash\phi.$ Note that in constant domain models, we no longer need to give the restriction on the assignment: $\sigma(x)\in D=D_w$ for all assignment $\sigma$, all $w\in W_\M$ and all $x\in \Var$. 

$\MLMSKEQ$ over S5-models is a quite powerful language. As we mentioned, the semantics for $\KS$ is in line with the \textit{mention-some} reading. We can also introduce a modal operator $\KA$ based on \textit{mention-all} semantics as below: 
$$\begin{array}{|rcl|}
\hline 
 \M, w, \sigma\vDash \KA \phi 
&\Leftrightarrow& \text{for each $d\in D$, \textit{either}}\M, w, \sigma[x\mapsto d]\vDash\Box\phi \text{ \textit{or} } \M, w, \sigma[x\mapsto d]\vDash\Box\neg \phi \\
\hline
\end{array}$$
Intuitively, $\KA\phi$ means for all objects in the constant domain $D$, the agent \textit{knows whether} $\phi$, e.g., \textit{I know who came to the party} means that for each person in concern, I know whether (s)he came or not. In terms of $\FOML$, $\KA\phi$ is essentially $\forall x (\Box\phi\lor \Box\neg \phi).$ Note that $\KS$ and $\KA$ not only differ only in the quantifiers. On the other hand, the natural $\forall$-version of $\KS$ is defined:
$$\begin{array}{|rcl|}
\hline 
\M, w, \sigma\vDash \Box^{\forall x} \phi &\Leftrightarrow& \text{for each $d\in D$, } \M, w, \sigma[x\mapsto d]\vDash\K\phi\\
\hline
\end{array}$$
$\M, w, \sigma\vDash \Box^{\forall x} \phi$ only says that of each object, the agent knows that it satisfies property $\phi(x)$, which differs from the semantics of the \textit{mention-all} operator.

There is also a very natural generalization of our  mention-some operator with multiple variables: $\K^{\vec{x}}$: 
$$\begin{array}{|rcl|}
\hline 
\M, w, \sigma\vDash \K^{x_1\cdots x_n} \phi &\Leftrightarrow& \text{there exist $d_1,\cdots, d_n\in D$ } \text{ such that } \M, t, \sigma[\vec{x}\mapsto \vec{d}]\vDash\Box\phi\\
\hline
\end{array}
$$

\begin{proposition}
$\KA,\Box^{\forall x}$, and $\K^{\vec{x}}$ can all be defined in $\MLMSKEQ$ over S5-models: 
\end{proposition}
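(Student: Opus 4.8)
The plan is to give, for each operator, an explicit $\MLMSKEQ$-formula and to verify equivalence over S5-models by exploiting that $\Box$ is an S5 modality, so modalities collapse within an equivalence class. Write $\phi\equiv\psi$ for ``$\phi$ and $\psi$ hold at the same pointed S5-model under the same assignment''. The workhorse is a stability lemma. Call a formula $\psi$ \emph{class-stable} if, over any S5-model, the truth of $\psi$ at $w$ under $\sigma$ depends only on the equivalence class $[w]_\sim$ and on $\sigma$. First I would record that (i) every $\Box\chi$ and every $\Diamond\chi$ is class-stable, since $wRv$ forces $[w]_\sim=[v]_\sim$; (ii) class-stability is preserved by $\neg,\land,\lor$; and (iii) since the domain is constant, it is preserved by $\KS$ and $\hKS$ (i.e.\ by $\exists x$ and $\forall x$), because the same witness $a\in D$ is available at every world of the class. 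The immediate consequence is that if $\psi$ is class-stable then $\Box\psi\equiv\psi\equiv\Diamond\psi$; in particular the S5 collapse $\Diamond\Box\chi\equiv\Box\chi$ holds.

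With the lemma in hand I would give the three definitions and check each. For $\Box^{\forall x}$, I claim $\Box^{\forall x}\phi\equiv\hKS\Box\phi$: unfolding the dual, $\hKS\Box\phi$ expresses $\forall x\,\Diamond\Box\phi$, and by the collapse $\Diamond\Box\phi\equiv\Box\phi$ this is $\forall x\,\Box\phi$, exactly the semantics of $\Box^{\forall x}\phi$. For $\KA$, note $\KA\phi$ is $\forall x(\Box\phi\lor\Box\neg\phi)$; since $\Box\phi\lor\Box\neg\phi$ is class-stable by (i)--(ii), I set $\KA\phi\equiv\Box^{\forall x}(\Box\phi\lor\Box\neg\phi)$, using the operator just defined. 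Unfolding gives $\forall x\,\Box(\Box\phi\lor\Box\neg\phi)$, and the inner $\Box$ is vacuous on the class-stable disjunction, leaving $\forall x(\Box\phi\lor\Box\neg\phi)$. For $\K^{x_1\cdots x_n}$, I iterate the single-variable operator, setting $\K^{x_1\cdots x_n}\phi\equiv\K^{x_1}\cdots\K^{x_n}\phi$. Reading from the inside, $\K^{x_n}\phi=\exists x_n\Box\phi$ is class-stable by (i) and (iii), so prefixing $\K^{x_{n-1}}$ yields $\exists x_{n-1}\Box\exists x_n\Box\phi\equiv\exists x_{n-1}\exists x_n\Box\phi$, the intervening $\Box$ collapsing; an easy induction, re-applying class-stability at each stage, produces $\exists x_1\cdots\exists x_n\Box\phi$, which is the semantics of $\K^{\vec{x}}\phi$.

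Aside from the routine Boolean and quantifier bookkeeping, the place that genuinely needs care---and the main obstacle---is clause (iii) of the stability lemma together with the collapse, both of which rely essentially on the S5 assumptions. Reflexivity, symmetry and transitivity of $\sim$ are what make $\Diamond\Box\chi\equiv\Box\chi$ and $\Box\psi\equiv\psi$ for class-stable $\psi$, while the \emph{constant} domain is exactly what allows an existential witness to be transported unchanged between worlds of a class, which is what makes $\exists x$ of a class-stable formula class-stable. Over increasing-domain or non-S5 frames these collapses fail, so the definitions are genuinely S5-specific. A final minor point in the $\K^{\vec{x}}$ case is to keep $x_1,\dots,x_n$ pairwise distinct to avoid capture when iterating $\KS$, which is harmless.
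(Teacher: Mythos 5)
Your proposal is correct and takes essentially the same route as the paper: the paper defines $\Box^{\forall x}\phi$ as $\hKS\K\phi$, $\K^{\vec{x}}\phi$ as $\K^{x_1}\cdots\K^{x_n}\phi$, and $\KA\phi$ as $\hKS(\K\phi\lor\K\neg\phi)$, justifying all three by the S5 validities $\Diamond\K\phi\lra\K\phi$ and $\K\K\phi\lra\K\phi$, which is exactly the modal-collapse argument you organize via your class-stability lemma. Your formula for $\KA$, namely $\hKS\K(\K\phi\lor\K\neg\phi)$, differs from the paper's only by a redundant inner $\K$, which is harmless over S5-models.
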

\begin{proof}
The following equivalences are valid for any $\MLMSKEQ$-formula $\phi$, based on the fact that $\Diamond\Box\phi\lra\Box\phi$ and $\Box\Box\phi\lra\Box\phi$ are valid on S5 models: 
\begin{itemize}
\item $\KA\phi\lra (\hK^x (\K\phi\lor \K\neg \phi))$
\item $\Box^{\forall x}\phi\lra \hKS\Box\phi$ 
\item $\K^{\vec{x}}\phi\lra \K^{x_1}\cdots \K^{x_n}\phi$
\end{itemize}
\end{proof}

\subsection{Proof system}
In the rest of this section we propose a Hilbert-style proof system for the equality-free $\MLMSK$ and then extend it to a proof system for $\MLMSKEQ$. 

First note that the following ``K axiom'' for $\KS$ is \textit{not} valid on S5-models, since the witnesses object for $x$ in   $\KS(\phi\to\psi)$ and $\KS\phi$ may be different: 
$$ \KS(\phi\to\psi)\to (\KS\phi\to \KS\psi)$$
Therefore the modal logic of $\MLMSK$ is \textit{not normal}, as  expected. We propose the following proof system: 
\begin{center}
	\begin{tabular}{ll}
		\multicolumn{2}{c}{System $\SMLMSK$}\\
		{\textbf{Axioms}}&\\
		\TAUT & \text{all axioms of propositional logic}\\
		\DISTK & $\K(\phi\to\psi)\to (\K\phi\to \K\psi)$\\
		\AxTrK& $\K\phi\to \phi $ \\
		\AxTrans& $\KS \phi\to\K\KS \phi$\\
		\AxEuc& $\neg \KS \phi\to\K\neg\KS \phi$\\
		\AxKtoMS &$\K (\phi[y\slash x]) \to \KS \phi$ \text{(if $\phi[y\slash x]$ is admissible)}\\	
         \AxMStoK  & $\KS\phi\to \K\phi$ \text{(if $x\not\in FV(\phi)$)}\\
\AxMStoMSK&$\KS \phi \to \KS\K \phi$  \\
        \AxKT& $\K\top $
\end{tabular}
	\begin{tabular}{lclc}
\textbf{Rules:}\\
 \MP & $\dfrac{\varphi,\varphi\to\psi}{\psi}$&\MONO& $\dfrac{\vdash\varphi\to\psi}{\vdash\KS\varphi\to\KS\psi}$ 
\end{tabular}
\end{center}
where $\phi[y\slash x]$ is \textit{admissible} if no free occurrence of $x$ in $\phi$ is in the scope of a modality binding $y$. 

The axioms are quite intuitive as they are, but they can be understood even more easily if we put them in some context to give concrete intuitive readings. For example, supposing we use $\KS\phi(x)$ to express that the agent \textit{knows how to achieve  $\phi$}, then $\AxTrans$, $\AxEuc$ express the (idealised) introspections of such goal-directed know-how: if you know how then you know you know how, and if you do not know how then you know you do not; $\AxKtoMS$ says that if you know that a particular way can achieve $\phi$ for sure, then you know how to achieve $\phi$; $\AxMStoK$ trivializes know-how to know-that, if $x$ is irrelevant to $\phi$; $\AxMStoMSK$ says that if you know how to achieve $\phi$ then you know how to ensure that you know $\phi$; $\AxKT$ says we know tautologies;\footnote{This is a technical axiom to recover the necessitation rule for $K$ from $\MONO$. Alternatively, we can also just include the necessitation rule for $\K$ instead of $\AxKT$.} and $\MONO$ is the monotonicity rule for know-how. 

\begin{remark}
Actually, all the above axioms have incarnations in the logic system $\mathbb{SKH}$ of know-how proposed in \cite{KH17}. Modulo the replacement of the know-how operator $\Kh$ by $\KS$, all the axioms and rules in $\mathbb{SKH}$ are derivable in $\SMLMSK$, except the composition axiom $\mathtt{KhKhtoKh}$ ($\Kh\Kh\phi\to\Kh\phi$), which indeed captures the characteristic feature about know-how: the witness strategies can be composed. 
\end{remark}

It is routine to verify the soundness, assuming the familiarity of first-order modal logic over S5-models.  
\begin{theorem}
$\SMLMSK$ is sound over S5 models. 
\end{theorem}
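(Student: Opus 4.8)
The plan is to verify, one axiom at a time, that every schema of $\SMLMSK$ is valid on S5-models and that both rules preserve validity; soundness then follows by an easy induction on derivation length. First I would clear away the schemata that hold over arbitrary Kripke frames: $\TAUT$ is valid because propositional tautologies are true under every pointed model and assignment, $\DISTK$ is the usual normality of $\K$, and $\AxKT$ holds because $\top$ is true everywhere; the rule $\MP$ preserves validity trivially. This leaves the genuinely modal and quantificational cases.

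Next I would treat the axioms that lean on the S5 frame conditions, unfolding the clause for $\KS$ into the explicit form $\M,w,\sigma\vDash\KS\phi$ iff there is $a\in D$ with $\M,v,\sigma[x\mapsto a]\vDash\phi$ for every $v$ with $w\sim v$. Reflexivity of $\sim$ gives $\AxTrK$ immediately. For $\AxTrans$ the key observation is that the \emph{same} witness can be reused: if $a$ works at $w$, then for any $v\sim w$ and any $u\sim v$ we have $w\sim u$ by transitivity, so $\M,u,\sigma[x\mapsto a]\vDash\phi$, whence $\M,v,\sigma\vDash\KS\phi$; hence $\M,w,\sigma\vDash\K\KS\phi$. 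The axiom $\AxMStoMSK$ is proved by the identical witness-reuse argument, again using transitivity. For $\AxEuc$ I would argue contrapositively on witnesses: given $v$ with $w\sim v$ and any candidate $a$, failure of $a$ at $w$ supplies $u$ with $w\sim u$ refuting $\phi$ under $x\mapsto a$, and symmetry together with transitivity (the Euclidean property) yield $v\sim u$, so $a$ fails from $v$ as well; thus no witness survives at $v$, and $\K\neg\KS\phi$ holds at $w$.

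The remaining three cases are the quantificational heart. For $\AxMStoK$ (with $x\notin FV(\phi)$) I would use that the truth value of $\phi$ does not depend on the value assigned to a non-free variable, so any witness $a$ furnished by $\KS\phi$ makes $\K\phi$ hold outright. For $\MONO$, if $\varphi\to\psi$ is valid then it holds at every $v\sim w$ under $\sigma[x\mapsto a]$, so a witness for $\KS\varphi$ is automatically a witness for $\KS\psi$. I expect the real obstacle to be $\AxKtoMS$, since it is the only case requiring a \emph{substitution lemma}: one must establish that $\M,v,\sigma\vDash\phi[y/x]$ iff $\M,v,\sigma[x\mapsto\sigma(y)]\vDash\phi$ whenever the substitution is admissible, and then take $a=\sigma(y)\in D$ as the witness for $\KS\phi$ at each $v\sim w$. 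Proving this lemma by induction on $\phi$ is routine except at the modal step, where the admissibility side-condition (no free $x$ in the scope of a modality binding $y$) is precisely what prevents $y$ from being captured by a $\K^y$- or $\hK^y$-binder and so keeps the equivalence intact. Getting this book-keeping right, rather than any deep idea, is where the care is needed.
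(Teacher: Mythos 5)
Your proposal is correct and is exactly the ``routine verification'' the paper alludes to (the paper omits the details entirely): an axiom-by-axiom validity check using reflexivity, transitivity and the Euclidean property for witness reuse, the locality of truth in free variables for $\AxMStoK$, and a substitution lemma under the admissibility side-condition for $\AxKtoMS$, with constant domains guaranteeing that witnesses transfer between $\sim$-related worlds. Nothing in your argument deviates from or falls short of what the paper intends.
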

$\SMLMSK$ is very powerful, as demonstrated by the following proposition: many usual ``suspects'' can be derived. 
\begin{proposition}
The following theorems are derivable and the rules are admissible in $\SMLMSK$: \\
\begin{center}
\begin{tabular}{llll}
$\AxMSKtoMS$ & $\KS\K\phi\to\KS\phi$ & $\AxMStotK$ & $\K\phi\lra \KS\phi\ (x\not\in FV(\phi))$ \\
$\NECK$ & $\dfrac{\vdash\varphi}{\vdash\K\varphi}$ & \NECMS &$\dfrac{\vdash\varphi}{\vdash\KS\varphi}$\\
\RE & $\dfrac{\vdash \varphi\lra\psi}{\vdash\chi(\psi)\lra \chi(\phi)}$ &$\KtoMS$  & {$\dfrac{\vdash \K\varphi\to\psi}{\vdash\KS\varphi\to\psi}$ $(x\not\in FV(\psi))$} \\
\AxMST & $\KS\top$  & $\AxRMS$ & $\KS\phi\lra \KSy\phi[y\slash x]\ (y \text{ is not in } \phi))$\\ 
\AxTransK & $\K \phi\to\K\K \phi$ & \AxEucK & $\neg \K \phi\to\K\neg\K\phi$
\end{tabular}
\end{center}
\end{proposition}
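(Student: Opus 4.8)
The plan is to derive every item by propositional reasoning (\TAUT, \MP) on top of the modal axioms, using \MONO\ as the only genuinely modal rule and exploiting the freedom to choose the bound variable of $\KS$ fresh. First I would bootstrap the two necessitation rules. For \AxMST\ observe that $\K\top\to\KS\top$ is an instance of \AxKtoMS\ (take $\phi=\top$, where $\top[y\slash x]=\top$ is vacuously admissible), so \MP\ with \AxKT\ yields $\KS\top$ for every variable $x$. Then \NECMS\ follows: from $\vdash\varphi$ get $\vdash\top\to\varphi$ propositionally, apply \MONO\ to obtain $\vdash\KS\top\to\KS\varphi$, and discharge the antecedent with \AxMST. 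For \NECK\ I would pick a variable $x\notin FV(\varphi)$ (possible since $\Var$ is infinite), use \NECMS\ to get $\vdash\KS\varphi$, and then \AxMStoK\ (applicable since $x\notin FV(\varphi)$) to conclude $\vdash\K\varphi$. With \NECK\ and \DISTK\ in hand, $\K$ enjoys the usual normal-modal congruence, which I will reuse repeatedly.

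The remaining ``$\K$--$\KS$ conversion'' facts are then short. For \AxMStotK\ (under $x\notin FV(\phi)$), the direction $\KS\phi\to\K\phi$ is \AxMStoK, and $\K\phi\to\KS\phi$ is the instance of \AxKtoMS\ with $y=x$ (so $\phi[y\slash x]=\phi$, trivially admissible). \AxMSKtoMS\ follows from \MONO\ applied to the axiom \AxTrK: $\K\phi\to\phi$. For the rule \KtoMS, given $\vdash\K\varphi\to\psi$ with $x\notin FV(\psi)$, I would apply \MONO\ to get $\vdash\KS\K\varphi\to\KS\psi$, then chain $\KS\psi\to\K\psi$ (\AxMStoK, using $x\notin FV(\psi)$) and $\K\psi\to\psi$ (\AxTrK) to reach $\vdash\KS\K\varphi\to\psi$; finally \AxMStoMSK\ supplies $\KS\varphi\to\KS\K\varphi$, and composition yields $\vdash\KS\varphi\to\psi$.

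The most delicate item is the $\alpha$-conversion axiom \AxRMS, where I expect the real bookkeeping to live. For $\KS\phi\to\KSy\phi[y\slash x]$ (with $y$ not in $\phi$) I would first derive $\vdash\K\phi\to\KSy\phi[y\slash x]$ as an instance of \AxKtoMS\ for the modality $\Box^y$: writing $\psi=\phi[y\slash x]$ we have $\psi[x\slash y]=\phi$, and the admissibility side-condition holds because every free $y$ in $\psi$ sits exactly at a former free-$x$ position of $\phi$, hence not under any $\Box^x$. Since $x\notin FV(\KSy\phi[y\slash x])$, the rule \KtoMS\ then upgrades this to $\KS\phi\to\KSy\phi[y\slash x]$. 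The converse is obtained symmetrically, swapping the roles of $x$ and $y$ and using that $y$ does not occur in $\phi$ to guarantee freshness and admissibility. The main obstacle throughout is precisely verifying these admissibility/capture conditions when instantiating \AxKtoMS, and checking that the freshly chosen variables do not clash with free variables of the surrounding contexts.

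Finally, the normal-modal laws for $\K$ drop out of the corresponding $\KS$-axioms via the freshness reduction. For \AxTransK\ I would fix $x\notin FV(\phi)$ so that \AxMStotK\ gives $\K\phi\to\KS\phi$, apply \AxTrans\ ($\KS\phi\to\K\KS\phi$), and then push $\KS\phi\to\K\phi$ (\AxMStoK) inside the box using \NECK\ and \DISTK\ to obtain $\K\KS\phi\to\K\K\phi$; concatenating gives $\K\phi\to\K\K\phi$. \AxEucK\ is the dual argument from \AxEuc\ ($\neg\KS\phi\to\K\neg\KS\phi$), using the contrapositive $\neg\K\phi\to\neg\KS\phi$ and $\K$-monotonicity to turn $\K\neg\KS\phi$ into $\K\neg\K\phi$. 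The replacement rule \RE\ I would prove by a routine induction on the context $\chi$, with the connective cases propositional, the $\K$-case handled by \NECK+\DISTK, and the $\KS$-case by applying \MONO\ to both directions of $\varphi\lra\psi$; the only caveat is the standard one of replacing solely at positions where no free variable of $\varphi,\psi$ is captured.
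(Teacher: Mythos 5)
Your proposal is correct and follows essentially the same route as the paper's own proof: \MONO\ as the only modal rule, bootstrapping \AxMST/\NECMS/\NECK, deriving \KtoMS\ from \MONO, \AxMStoMSK, \AxMStoK\ and \AxTrK, and handling \AxRMS\ in both directions via \AxKtoMS\ plus \KtoMS\ using the identity $(\phi[y\slash x])[x\slash y]=\phi$ together with the admissibility checks. The only (inessential) deviations are that you obtain \AxMST\ directly from \AxKtoMS\ and \AxKT\ rather than via \AxMStotK, and you derive \AxTransK/\AxEucK\ with \NECK+\DISTK\ monotonicity where the paper invokes \RE; both come to the same thing.
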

\begin{proof}(Sketch)
$\AxMSKtoMS$ is based on $\AxTrK$ and $\MONO$. $\K\phi\to \KS\phi$ is a special case of $\AxKtoMS$ when $y=x$. Together with $\AxMStoK$ we have $\AxMStotK$. Now from $\AxKT$ and $\AxMStotK$ we have $\AxMST$. Then based on $\AxKT$ and $\AxMST$,  $\NECMS$ and $\NECK$ follow from $\MONO$. The rule of Replacement of Equals ($\RE$) can be proved inductively based on $\MONO$. $\KtoMS$ is based on $\MONO$, $\AxMStoMSK$, $\AxMStotK$ and $\AxTrK$.  The standard introspection axioms of $\K$ are special instances of inspection axioms $\AxTrans$ and $\AxEuc$ based on $\AxMStotK$ and $\RE$.\footnote{$\AxTrans$ can also be derived from $\AxTrK$ and $\AxEuc$ in the system $\SMLMSK$ like in the case of propositional S5 system.} Let us now look at $\AxRMS$, which is the renaming axiom for bound variables. Right-to-left can be derived by starting with $\AxKtoMS$: $\vdash \Box(\phi[y\slash x])\to \KS \phi$, then since $y$ does not appear in $\phi$, we have $\vdash \KSy(\phi[y\slash x])\to \KS\phi$ by $\KtoMS$. For the other direction, note that $\phi=(\phi[y\slash x])[x\slash y]$ if $y$ is not in $\phi$. Then since $(\phi[y\slash x])[x\slash y]$ is admissible, by $\AxKtoMS$, $\vdash\K(\phi[y\slash x])[x\slash y]\to\KSy \phi[y\slash x]$, namely $\vdash\K\phi \to\KSy \phi[y\slash x].$ Then by $\KtoMS$, $\vdash \KS \phi \to  \KSy\phi[y\slash x]$.   
\end{proof}
\begin{remark}Now we know that the S5 system of $\Box$ is a subsystem of $\SMLMSK$. We may also view $\AxKtoMS$, $\KtoMS$ as analogues of the following axiom and rule respectively in first-order logic: 
$$\forall x \phi \to \phi[y\slash x] \qquad \dfrac{\vdash \phi\to\psi}{\vdash \phi\to\forall x\psi} \text{ $(x\not\in FV(\phi))$}$$
$\RE$ is the rule of replacement and $\AxRMS$ allows us to reletter the bound variables, as in \FOL. Moreover, the Barcan formula ($\forall x\Box \phi\to \Box\forall x\phi$), over S5-models, can be expressed as: $$ \hKS\K\phi \to \K\hKS\K\phi$$ which is also derivable in $\SMLMSK$ due to $\AxEuc$.    
\end{remark}
The following provable formula plays a role in the later completeness proof.
\begin{proposition}\label{prop.kimp}
$\vdash_{\SMLMSK} (\KS\phi\to \K\psi) \to \K(\KS\phi\to \K\psi)$.
\end{proposition}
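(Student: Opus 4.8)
The plan is to reduce the antecedent to a disjunction and then argue by cases, exploiting the fact that over S5 both $\KS$ and $\K$ are fully introspective. First I would note that $\KS\phi\to\K\psi$ is propositionally equivalent to $\neg\KS\phi\lor\K\psi$, so by \TAUT\ and \MP\ it suffices to show $\vdash_{\SMLMSK}(\neg\KS\phi\lor\K\psi)\to\K(\KS\phi\to\K\psi)$. Since each disjunct can be boxed separately, I would establish the two implications $\vdash\neg\KS\phi\to\K(\KS\phi\to\K\psi)$ and $\vdash\K\psi\to\K(\KS\phi\to\K\psi)$ and then combine them by propositional disjunction elimination.

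Throughout I would freely use $\K$-monotonicity, i.e.\ the derived rule that from $\vdash\alpha\to\beta$ one obtains $\vdash\K\alpha\to\K\beta$ (via \NECK\ and \DISTK), which is available because the normal S5 system for $\K$ is a subsystem of $\SMLMSK$. For the first case, $\neg\KS\phi\to(\KS\phi\to\K\psi)$ is a tautology, so $\K$-monotonicity gives $\vdash\K\neg\KS\phi\to\K(\KS\phi\to\K\psi)$; chaining this with the negative-introspection axiom \AxEuc\ ($\neg\KS\phi\to\K\neg\KS\phi$) yields $\vdash\neg\KS\phi\to\K(\KS\phi\to\K\psi)$. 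For the second case, $\K\psi\to(\KS\phi\to\K\psi)$ is again a tautology, so $\K$-monotonicity gives $\vdash\K\K\psi\to\K(\KS\phi\to\K\psi)$; chaining with the positive-introspection theorem \AxTransK\ ($\K\psi\to\K\K\psi$) yields $\vdash\K\psi\to\K(\KS\phi\to\K\psi)$.

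With both cases in hand, a single propositional step combines them into $\vdash(\neg\KS\phi\lor\K\psi)\to\K(\KS\phi\to\K\psi)$, and rewriting the antecedent back to $\KS\phi\to\K\psi$ finishes the proof. There is no serious obstacle here; the only point worth flagging is that the two cases genuinely require two \emph{different} introspection principles --- negative introspection \AxEuc\ for the $\KS$-subformula and positive introspection \AxTransK\ for the $\K$-subformula --- reflecting that the boxed conditional is ``known'' precisely because the truth values of both of its constituents are individually known in S5.
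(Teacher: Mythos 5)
Your proof is correct, and every ingredient you use is legitimately available at this point in the paper: $\K$-monotonicity follows from the derived rule \NECK\ together with the axiom \DISTK, the negative-introspection axiom \AxEuc\ for $\KS$ is primitive, and the positive-introspection principle \AxTransK\ for $\K$ is listed among the derived theorems. The paper takes a different, essentially dual, route: it passes to the contrapositive $\hK(\KS\phi\land\neg\K\psi)\to(\KS\phi\land\neg\K\psi)$, distributes $\hK$ over the conjunction, and then invokes the S5 equivalences $\hK\KS\phi\lra\KS\phi$ and $\hK\neg\K\psi\lra\neg\K\psi$ (obtained from \AxEuc, \AxTransK\ and \AxTrK), closing the argument with the replacement rule \RE. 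Both arguments thus pivot on exactly the same two facts that you flag at the end --- negative introspection of $\KS$ and positive introspection of $\K$ --- but your decomposition works on the box side rather than the diamond side: you split the material conditional into $\neg\KS\phi\lor\K\psi$, box each disjunct separately, and recombine propositionally. What this buys you is a more self-contained derivation: you never need \RE, nor the \AxTrK-halves of the biconditionals, only monotonicity and chaining. What the paper's version buys is compactness, since the S5 equivalences are reused wholesale as off-the-shelf facts rather than re-derived in the one direction needed.
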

\begin{proof}
Consider the contrapositive of the formula to be proved: 
$\hK(\KS\phi \land \neg \K\psi)\to (\KS\phi\land  \neg \K\psi)$.
It is routine to show that $\vdash \hK(\KS\phi \land \neg \K\psi) \to (\hK\KS\phi \land \hK\neg \K\psi)$. Now by  $\AxEuc$, $\AxTransK$, and $\AxTrK$, $\vdash \hK\KS\phi\lra \KS\phi$ and $\vdash \hK\neg \K\psi\lra \neg \K\psi$. Therefore, by \RE, $\vdash \hK(\KS\phi \land \neg \K\psi)\to (\KS\phi\land  \neg \K\psi).$ 
\end{proof}
In the following we define the Hilbert system $\SMLMSKEQ$ for $\MLMSKEQ$ by extending $\SMLMSK$ with the following two extra axiom schemata:
\begin{center}
	\begin{tabular}{ll}
		{\textbf{Axioms}}&\\
        \AxId & $x\approx x$\\
        \AxSub & $x\approx y \to (\phi\to\psi)$ if $\phi$ and $\psi$ only differ in that\\ & some free occurrences of $x$ in one formula \\
        &are replaced by free occurrences of $y$ in another.\\
\end{tabular}
\end{center}
\begin{theorem}
$\SMLMSKEQ$ is sound. 
\end{theorem}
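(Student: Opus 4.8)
The plan is to leverage the already-established soundness of $\SMLMSK$ (the preceding theorem). Since $\SMLMSKEQ$ is obtained from $\SMLMSK$ by adding only the two axiom schemata $\AxId$ and $\AxSub$, with no new inference rules, and since all rules and the remaining axioms of $\SMLMSK$ are already sound over S5-models, it suffices to verify that every instance of $\AxId$ and $\AxSub$ is valid over S5-models. The case of $\AxId$ is immediate: for any S5-model $\M$, world $w$, and assignment $\sigma$, the clause for $\approx$ gives $\M, w, \sigma \vDash x \approx x$ iff $\sigma(x) = \sigma(x)$, which always holds.

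The substantive case is $\AxSub$. First I would isolate the semantic content as a substitution lemma: if $\sigma(x) = \sigma(y)$ and $\psi$ is obtained from $\phi$ by replacing some free occurrences of $x$ with free occurrences of $y$ (in the admissible sense of the axiom), then $\M, w, \sigma \vDash \phi \iff \M, w, \sigma \vDash \psi$. Granting this lemma, $\AxSub$ follows at once: whenever $\M, w, \sigma \vDash x \approx y$ we have $\sigma(x) = \sigma(y)$, so $\phi$ and $\psi$ agree in truth value, hence $\M, w, \sigma \vDash \phi \to \psi$.

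I would prove the substitution lemma by induction on the structure of $\phi$. The atomic cases $x'\approx y'$ and $P\vec{z}$ are direct from $\sigma(x) = \sigma(y)$, since replacing an argument $x$ by $y$ leaves the tuple of denoted objects unchanged. The Boolean cases follow immediately from the induction hypothesis. For $\Box$, the assignment $\sigma$ is carried unchanged to every accessible world, so the induction hypothesis applies verbatim at each successor. The delicate case is $\KS\chi$ (equivalently $\exists z\Box$): the occurrences of $x$ and $y$ bound by the modality are, by definition, not free and hence untouched by the replacement, so only occurrences free in $\KS\chi$ are substituted. If the bound variable $z$ is distinct from both $x$ and $y$, then passing to a witnessed assignment $\sigma[z\mapsto a]$ preserves the equality, $\sigma[z\mapsto a](x) = \sigma[z\mapsto a](y)$, and the induction hypothesis applies to $\chi$ at each successor; if instead $z$ coincides with $x$ or with $y$, then the relevant occurrences are either bound (hence not substituted) or their substitution is ruled out by admissibility (which requires the new $y$-occurrences to remain free), so the subformula under $\KS$ is unchanged and there is nothing to prove.

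The main obstacle I anticipate is precisely the bookkeeping in this binding case: making the notion of admissible replacement interact correctly with the quantifier-like behaviour of $\KS$, so that the invariant $\sigma(x) = \sigma(y)$ is maintained exactly on those occurrences actually being substituted, while captured occurrences are correctly excluded. Everything else parallels the standard substitution lemma for first-order logic and is routine.
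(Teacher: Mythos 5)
Your proof is correct, and it matches the paper's approach: the paper states this theorem without any written proof, treating it as routine, and your argument—reducing soundness of $\SMLMSKEQ$ to the validity of the two new axioms $\AxId$ and $\AxSub$, with $\AxSub$ handled by a substitution lemma proved by induction on formula structure—is exactly the standard verification being left implicit. Your bookkeeping in the binding case is also right: when the bound variable of $\KS$ equals $x$ the occurrences are bound and untouched, when it equals $y$ the replacement is excluded by the freeness requirement on the new $y$-occurrences, and otherwise the invariant $\sigma(x)=\sigma(y)$ survives the update $\sigma[z\mapsto a]$ so the induction hypothesis applies at each successor.
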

It is routine to show: 
\begin{proposition}
The following are provable in $\SMLMSKEQ:$\\
\begin{tabular}{llll}
\AxSym & $x\approx y\to y \approx x$ & \AxTranseq & $x \approx y\land y\approx z \to x\approx z$\\
\AxKEQ & $x\approx y\to \K (x\approx y)$ &\AxKNEQ & $ x\not \approx y\to \K (x\not \approx y)$ \\
\end{tabular}
\end{proposition}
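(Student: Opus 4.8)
The plan is to prove each of the four schemata by reducing them to the propositional S5 reasoning about $\Box$ that is already available in $\SMLMSK$, together with the two new equality axioms $\AxId$ and $\AxSub$. First I would handle $\AxSym$ and $\AxTranseq$, which are the pure equality facts and use no modalities. For $\AxSym$, instantiate $\AxSub$ with $\phi = (x\approx x)$ and $\psi = (y\approx x)$: since these differ only by replacing a free occurrence of $x$ by $y$, we get $\vdash x\approx y \to (x\approx x \to y\approx x)$, and combining with $\AxId$ ($\vdash x\approx x$) and propositional reasoning (\TAUT, \MP) yields $\vdash x\approx y\to y\approx x$. For $\AxTranseq$, instantiate $\AxSub$ with a substitution that replaces a free $y$ by $x$ inside the atom $y\approx z$, giving $\vdash x\approx y \to (y\approx z \to x\approx z)$, which is exactly the desired implication after rearranging the antecedents propositionally.

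Next I would prove $\AxKEQ$, $\vdash x\approx y\to \K(x\approx y)$. The natural route is to observe that $x\approx y$ is (up to the trivial rewriting handled by $\AxSub$ and $\AxId$) a formula that $\K$ should preserve, so I would try to derive $\vdash \K(x\approx x)$ from $\AxId$ via $\NECK$ (which the previous proposition supplies), and then transport it along the equality using $\AxSub$ \emph{inside} the scope of $\K$. Concretely, from $\vdash x\approx x$ and $\NECK$ we obtain $\vdash \K(x\approx x)$; the goal is to move from $x\approx y$ to $\K(x\approx y)$. Here I would invoke $\AxSub$ with $\phi = \K(x\approx x)$ and $\psi = \K(x\approx y)$: these differ only in that a free occurrence of $x$ has been replaced by $y$ — and crucially this replacement is \emph{admissible} because $\K$ binds no individual variable, so the occurrence of $x$ under $\K$ is still free. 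Thus $\vdash x\approx y \to (\K(x\approx x)\to \K(x\approx y))$, and discharging the theorem $\K(x\approx x)$ gives $\vdash x\approx y\to \K(x\approx y)$.

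Finally, $\AxKNEQ$, $\vdash x\not\approx y\to \K(x\not\approx y)$, is the characteristic negative-introspection-style fact. The cleanest derivation mirrors the S5 derivation of $\AxEucK$. Starting from $\AxKEQ$ I would take the contrapositive of its instance in the dual form: from $\vdash x\approx y\to \K(x\approx y)$, propositional reasoning gives $\vdash \hK(x\approx y)\to x\approx y$ only after using the S5 machinery, so instead I would argue directly. Apply $\K$-reasoning: since $\vdash \K(x\approx y)\lra \K(x\approx y)$, use $\AxEucK$ ($\neg\K\phi\to\K\neg\K\phi$, already derived in the previous proposition) with $\phi = (x\not\approx y)$ together with $\AxKEQ$. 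More transparently, from $\AxKEQ$ we get $\vdash \neg\K(x\approx y)\to \neg(x\approx y)$, i.e.\ $\vdash \neg\K(x\approx y)\to x\not\approx y$; the contrapositive-style target then follows by applying $\AxEucK$ and $\AxTrK$ to collapse the iterated modalities, exactly as $\neg p\to\K\neg p$ is obtained in propositional S5.

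The main obstacle I expect is the bookkeeping around admissibility of substitution in $\AxSub$ when the replaced variable sits inside the scope of $\K$ (as in the $\AxKEQ$ step) — one must check that $\K$, unlike $\KS$, binds no individual variable, so that occurrences of $x$ and $y$ remain free and the $\AxSub$ side-condition is genuinely met. The equality cases $\AxSym$ and $\AxTranseq$ are then purely propositional once the right $\AxSub$ instance is chosen, and $\AxKNEQ$ reduces to the standard S5 negative-introspection derivation applied to the equality atom.
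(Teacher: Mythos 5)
Your proposal is correct and takes essentially the same route as the paper: $\AxSub$ together with $\AxId$ gives $\AxSym$ and $\AxTranseq$, $\NECK$ applied to $\AxId$ plus an $\AxSub$ instance with the equality atoms under $\K$ gives $\AxKEQ$, and $\AxKNEQ$ follows from the contrapositive of $\AxKEQ$ via the standard S5 collapse. The only cosmetic difference is in the last step, where the paper chains through the B-style theorem $\phi\to \K\hK\phi$ (derived from $\AxTrK$ and $\AxEuc$) while you use the contrapositive of $\AxTrK$ together with $\AxEucK$ and the $\NECK$/$\DISTK$ lifting---the same derivation, merely unpackaged.
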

\begin{proof} $\AxSym$ and $\AxTranseq$ are due to $\AxSub$. To derive $\AxKEQ$, we first have $\vdash\K (x\approx x)$ by $\AxId$ and $\NECK$. By $\AxSub$ we have $\vdash x\approx y\to (\K (x\approx y)\to \K (x\approx x)).$ Therefore $\AxKEQ$ is provable. 
 
For $\AxKNEQ$, from $\AxKEQ$ we have $\vdash \hK x\not\approx y\to x\not\approx y$, thus $\vdash\Box\hK x\not\approx y\to \K  x\not\approx y$ by $\NECK$ and $\DISTK$. Note that $\phi\to \K\hK\phi$ is derivable by $\AxTrK$ and $\AxEuc$. Therefore by taking $\phi=x\not\approx y$ and using $\MP$ we have $\vdash\AxKNEQ$. \end{proof}


\section{Completeness} \label{sec.comp}
Let us first focus on the completeness of $\SMLMSK$ without axioms for equalities.  

We first extend the language of $\MLMSK$ with countably infinitely  many new variable symbols. Call the new language $\MLMSK^+$ and the variable set $\Var^+$. In the following, we say a set of formulas is \textit{$\SMLMSK^+$-consistent} if it is $\SMLMSK$-consistent w.r.t. the extended language $\MLMSK^+$. 

\begin{definition}
A set of $\MLMSK^+$ formulas has $\exists$-property if for each $\KS\phi\in \MLMSK^+$ it contains a ``witness'' formula $\KS\phi \to \K\phi[y\slash x]$ for some $y\in \Var^+$ where $\phi[y\slash x]$ is admissible. 
\end{definition}

\begin{definition}[Canonical model for $\SMLMSK^+$]\label{def.cm}
The canonical model is a tuple $\lr{W^c, D^c, \sim^c, \rho^c}$\\
 where: 
\begin{itemize}
\item $W^c$ is the set of maximal $\SMLMSK^+$-consistent sets  with $\exists$-property,
\item $D^c=\Var^+$,
\item $s\sim^c t$ iff $\K(s)\subseteq t$ where $\K(s):=\{\phi\mid \K\phi\in s\}$,
\item $ \vec{x}\in \rho^c(P,s)$ iff $P\vec{x}\in s$.
\end{itemize}
\end{definition}
It is routine to show that $\sim^c$ is an equivalence relation, by using axioms $\AxTrK$, $\AxTransK$, and $\AxEucK$.

\begin{lemma}\label{lem.ex}
If $\K\psi\not\in s \in W^c$ then there exists a $t\in W^c$ such that $s\sim^ct$ and $\neg \psi\in t$.  
\end{lemma}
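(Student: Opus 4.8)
The plan is to run the standard existence-lemma argument for canonical models; the only non-routine ingredient is the preservation of consistency under the Henkin witnesses demanded by the $\exists$-property, so I will set up the rest quickly and concentrate there.

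First I would show that $\K(s)\cup\{\neg\psi\}$ is $\SMLMSK^+$-consistent, where as in Definition~\ref{def.cm} we write $\K(s)=\{\phi\mid \K\phi\in s\}$. If it were not, then for some finite conjunction $\chi$ of members of $\K(s)$ we would have $\vdash \chi\to\psi$. Since $\K$ is a normal modality in $\SMLMSK$ (we have $\DISTK$ and the derived rule $\NECK$), this gives $\vdash \K\chi\to\K\psi$, hence $\vdash \bigwedge_i\K\chi_i\to\K\psi$ where the $\chi_i$ are the conjuncts of $\chi$. As each $\K\chi_i\in s$ by definition of $\K(s)$, maximal consistency of $s$ forces $\K\psi\in s$, contradicting the hypothesis $\K\psi\notin s$.

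Next I would extend $\K(s)\cup\{\neg\psi\}$ to a maximal $\SMLMSK^+$-consistent set $t$ enjoying the $\exists$-property, by a Lindenbaum construction interleaved with witness insertion. I enumerate all $\MLMSK^+$-formulas and process them in stages: at each stage I decide the current formula in the usual maximal-consistent fashion and, whenever a formula of the form $\KS\phi$ is reached, I adjoin a witness $\KS\phi\to\K\phi[y\slash x]$ with $y$ chosen \emph{fresh}, i.e. occurring neither in $\phi$ nor in any formula accumulated so far. Only finitely many variables have been used at any finite stage and $\Var^+$ is infinite, so such a $y$ always exists; taking the union over all stages gives a maximal consistent set with the $\exists$-property.

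The key step, and the main obstacle, is that adjoining a fresh witness preserves consistency. Suppose $\Gamma$ is consistent but $\Gamma\cup\{\KS\phi\to\K\phi[y\slash x]\}$ is not, with $y$ fresh. Then $\Gamma\vdash\KS\phi$ and $\Gamma\vdash\neg\K\phi[y\slash x]$, so for some finite conjunction $\gamma$ of members of $\Gamma$ we have $\vdash\K\phi[y\slash x]\to\neg\gamma$, where $y\notin FV(\neg\gamma)$ because $y$ does not occur in $\Gamma$. Applying the derived rule $\KtoMS$ at the variable $y$ yields $\vdash\KSy(\phi[y\slash x])\to\neg\gamma$, and since $y$ does not occur in $\phi$ the renaming theorem $\AxRMS$ gives $\vdash\KS\phi\to\neg\gamma$. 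Combined with $\Gamma\vdash\KS\phi$, this makes $\Gamma$ inconsistent, a contradiction. Hence the constructed $t$ is a genuine element of $W^c$; it contains $\neg\psi$ and satisfies $\K(s)\subseteq t$, so $s\sim^c t$ by the definition of $\sim^c$, which completes the proof.
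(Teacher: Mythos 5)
There is a genuine gap at exactly the step you flagged as the key one: the freshness claim is unavailable in this setting. Your Lindenbaum construction must extend the set $\K(s)\cup\{\neg\psi\}$, and every finite conjunction $\gamma$ used in an inconsistency derivation may draw on members of $\K(s)$, not just on the finitely many formulas you have ``accumulated'' by witness insertion. But $s$ is a \emph{maximal} $\SMLMSK^+$-consistent set in the already-extended language, so $\K(s)$ is an infinite set that in general mentions \emph{every} variable of $\Var^+$ (for instance, $\NECK$ applied to the tautology $Py\to Py$ puts $\K(Py\to Py)$ in $s$, hence $Py\to Py\in\K(s)$, for each $y$). Consequently there is no $y$ that is fresh for $\Gamma\supseteq\K(s)$, and your appeal to $\KtoMS$ (which needs $y\notin FV(\neg\gamma)$) breaks down: the conjunct $\gamma$ witnessing inconsistency may well contain $y$ free. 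The fresh-variable argument you give is the right one for Lemma~\ref{lem.mcs}, where the starting set lives in the smaller language $\MLMSK$ and the new variables of $\Var^+$ genuinely are fresh; it cannot be reused for Lemma~\ref{lem.ex}, where the constraint set already exhausts the extended language.

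The paper's proof circumvents this precisely where yours fails: instead of choosing $y$ fresh, it chooses $y$ so that the witness formula is \emph{consistent with} $\{\theta_k\}\cup\K(s)$, and proves such a $y$ exists by contradiction. If no admissible $y$ worked, then for every admissible $y$ one could derive inside $s$ (via $\NECK$ and $\DISTK$) the formula $\K(\K^{x_{k+1}}\phi_{k+1}\to\K\phi_{k+1}[y\slash x_{k+1}])\to\K\neg\theta_k$; since $s$ itself has the $\exists$-property, it contains a witness implication for some particular $y^*$, and Proposition~\ref{prop.kimp} (that $\vdash(\KS\phi\to\K\psi)\to\K(\KS\phi\to\K\psi)$, an S5-style fact) promotes that witness to $\K(\K^{x_{k+1}}\phi_{k+1}\to\K\phi_{k+1}[y^*\slash x_{k+1}])\in s$, yielding $\neg\theta_k\in\K(s)$ and contradicting the consistency of $\{\theta_k\}\cup\K(s)$. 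This use of the $\exists$-property of the \emph{source} world $s$, via Proposition~\ref{prop.kimp}, is the idea missing from your argument; without it (or some substitute), the witness-insertion step does not go through.
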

\begin{proof}
It is routine in normal modal logic to show that if $\K\psi\not\in s$ then $\K(s)\cup\{\neg \psi\}$ is consistent. Now we show that $\K(s)\cup\{\neg \psi\}$ can be extended to an $\SMLMSK^+$-maximal consistent set with $\exists$-property. We follow the general strategy in \cite{Cresswell96} by adding witness formulas one by one. 

Let $\theta_0=\neg \psi$. We enumerate $\KS$-formulas as: $\KSxf\phi_1, \K^{x_2}\phi_2, \dots$ We define $\theta_{k+1}$ as the formula: 
$$\theta_k\land (\K^{x_{k+1}}\phi_{k+1}\to \K\phi_{k+1}[y\slash x_{k+1}])$$
\noindent where $y$ is the first variable in a fixed enumeration of $\Var^+$ such that $\phi_{k+1}[y\slash x_{k+1}]$ is admissible and $\K^{x_{k+1}}\phi_{k+1}\to \K\phi_{k+1}[y\slash x_{k+1}]$ is consistent with $\{\theta_k\}\cup \K(s)$. 

We now show that such $\theta_{k+1}$ always exists (i.e., such a $y$ exists), if $\{\theta_k\} \cup \K(s)$ is consistent. Towards a contradiction, suppose that $\{\theta_k\}\cup \K(s)$ is consistent but there is no such a $y$, i.e., for each $y\in\Var^+$ such that $\phi_{k+1}[y\slash x_{k+1}]$ is admissible there are $\chi_1,\dots,\chi_n\in \K(s)$ such that $$\vdash \chi_1\land \dots \land \chi_n\to ((\K^{x_{k+1}}\phi_{k+1}\to \K\phi_{k+1}[y\slash x_{k+1}])\to \neg\theta_k).$$

By $\NECK$ and $\DISTK$ and the fact that $\K\chi_i\in s$ for $1\leq i\leq n$, it is routine to show that $\K((\K^{x_{k+1}}\phi_{k+1}\to \K\phi_{+1}[y\slash x_{k+1}])\to \neg \theta_k)$ is also in $s$. By $\DISTK$ again, 
$$\K(\K^{x_{k+1}}\phi_{k+1}\to \K\phi_{k+1}[y\slash x_{k+1}])\to \K \neg \theta_k \quad (\star)$$ is in $s$ for all $y$ such that $\phi_{k+1}[y\slash x]$ is admissible. 

Note that $s$ has the $\exists$-property, therefore, $\K^{x_{k+1}}\phi_{k+1}\to \K\phi_{k+1}[y^*\slash x]$ is in $s$ for some particular $y^*$ such that $\phi_{k+1}[y^*\slash x_{k+1}]$ is admissible. By Proposition \ref{prop.kimp}, $\K(\K^{x_{k+1}}\phi_{k+1}\to \K\phi_{k+1}[y^*\slash x_{k+1}])$ is in $s$. By ($\star$), we have $\K \neg \theta_k$ is in $s$, thus $\neg\theta_k\in \K(s)$ which is in contradiction with that $\{\theta_k\}\cup\Box(s)$ is consistent. 

Now since $\{\theta_0\} \cup \K(s)=\{\neg \psi \}\cup \K(s)$ is consistent, we can indeed construct all the $\theta_k$. Let $\Gamma=\{\theta_k \mid k\in\mathbb{N}\}\cup \Box(s).$ $\Gamma$ is consistent and the $\exists$-property is essentially built-in.  We can then extend it into an $\SMLMSK^+$-maximal consistent set $t$. It then follows that $s\sim^ct$ and $\neg \phi\in t$.   
\end{proof}
Now comes the truth lemma. 
\begin{lemma}\label{lem.truth} Let $\sigma^*$ be the assignment such that $\sigma^*(x)=x$ for all $x\in\Var^+$. For any $\phi\in\MLMSK^+$, any $s\in W^c$: 
$$\M^c, s, \sigma^* \vDash \phi \iff \phi \in s $$
\end{lemma}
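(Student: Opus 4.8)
The plan is to prove the biconditional by induction on the structure of $\phi$, exactly as in the Henkin-style truth lemma for propositional modal logic, with the $\KS$-clause carrying the real weight. Before starting the induction I would record a routine agreement fact: whenever $\phi[y\slash x]$ is admissible, $\M^c, t, \sigma^* \vDash \phi[y\slash x]$ iff $\M^c, t, \sigma^*[x\mapsto y]\vDash \phi$ (using that $\sigma^*(y)=y$). Since $\sigma^*$ is the identity, the only role of the assignment in $\M^c$ is to track which variable currently stands for $x$, so this is the usual coincidence lemma of first-order modal logic. I would also note that $\M^c$ is a constant-domain S5-model, so $\delta^c(s)=D^c=\Var^+$ at every $s$.

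The atomic and Boolean cases are immediate: $\M^c, s, \sigma^* \vDash P\vec{x}$ iff $\vec{x}\in \rho^c(P,s)$ iff $P\vec{x}\in s$ by the definition of $\rho^c$ (here the identity $\sigma^*$ is essential), and negation and conjunction follow from $s$ being a maximal consistent set. The $\K$-case is the standard one: if $\K\phi\in s$ then $\phi\in t$ for every $t$ with $s\sim^c t$ by definition of $\sim^c$, and the IH gives $\M^c, t, \sigma^*\vDash \phi$; conversely, if $\K\phi\notin s$ then Lemma~\ref{lem.ex} produces a $t\sim^c s$ with $\neg\phi\in t$, whence $\M^c, t, \sigma^*\nvDash \phi$ by IH and $\M^c, s, \sigma^*\nvDash \K\phi$.

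The crux is the $\KS$-case. For the direction $\KS\phi\in s \Rightarrow \M^c,s,\sigma^*\vDash\KS\phi$, the $\exists$-property gives a witness $\KS\phi\to\K(\phi[y\slash x])\in s$ with $\phi[y\slash x]$ admissible, so $\K(\phi[y\slash x])\in s$ by modus ponens; by definition of $\sim^c$ this yields $\phi[y\slash x]\in t$ for all $t\sim^c s$, and the IH applied to $\phi[y\slash x]$ (same complexity as $\phi$, hence strictly below $\KS\phi$) gives $\M^c,t,\sigma^*\vDash\phi[y\slash x]$ for all such $t$. The agreement fact then shows $y$ witnesses $\M^c,s,\sigma^*\vDash\KS\phi$. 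For the converse, if $\M^c,s,\sigma^*\vDash\KS\phi$ then some $a=y\in D^c=\Var^+$ satisfies $\M^c,t,\sigma^*[x\mapsto y]\vDash\phi$ for all $t\sim^c s$; by the agreement fact $\M^c,t,\sigma^*\vDash\phi[y\slash x]$, so by IH $\phi[y\slash x]\in t$ for all $t\sim^c s$, the contrapositive of Lemma~\ref{lem.ex} forces $\K(\phi[y\slash x])\in s$, and $\AxKtoMS$ delivers $\KS\phi\in s$.

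The main obstacle is twofold. First, the induction must be applied to $\phi[y\slash x]$ rather than to $\K(\phi[y\slash x])$, since the latter has the same complexity as $\KS\phi$; variable renaming does not increase complexity, so this is sound but must be stated with care. Second, and more delicate, is admissibility: in the semantic direction the witness $a$ is an arbitrary element of $D^c=\Var^+$ and may be captured by a modality inside $\phi$, so $\phi[y\slash x]$ need not be admissible. I would dispose of this by first relettering the bound variables of $\phi$ away from the chosen witness---semantically this preserves truth, and syntactically it preserves membership via $\AxRMS$ and $\RE$---after which the agreement fact and $\AxKtoMS$ apply verbatim. Everything else is routine bookkeeping about maximal consistent sets.
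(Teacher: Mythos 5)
Your proof is correct and follows essentially the same route as the paper's: structural induction in which the $\exists$-property supplies an admissible witness for the membership-to-truth direction, while \AxKtoMS, Lemma~\ref{lem.ex}, and \AxRMS/\RE-relettering of bound variables handle the capture problem in the other direction, all mediated by the coincidence fact for the identity assignment $\sigma^*$. The only cosmetic differences are that you argue the converse contrapositively (truth implies membership, relettering around the single semantic witness) where the paper shows $\KS\psi\notin s$ forces falsity for every candidate witness $y$, and that you use the definition of $\sim^c$ directly where the paper takes a detour through \AxTransK\ and \AxTrK.
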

\begin{proof}
We do induction on the structure of the formula $\phi$. 
\begin{itemize}
\item[$P\vec{x}$] By the definition of $\rho^c$ and $\sigma^*$, it is obvious. 
\item[Bool]Boolean cases are trivial.  
\item[$\K\psi$] Routine, based on Lemma \ref{lem.ex}.  
\item[$\KS\psi$]Supposing $\KS\psi\in s$, by $\exists$-property of $s$ there exists $\K\psi[y\slash x]\in s$ for some $y$ such that $\psi[y\slash x]$ is admissible. By $\AxTransK$, $\K\K\psi[y\slash x]\in s$. Supposing $s\sim^c t$, we have $\K\psi[y\slash x]\in t$ due to the definition of $\sim^c$. By $\AxTrK$, $\psi[y\slash x]\in t$. By IH, $\M, t, \sigma^* \vDash \psi[y\slash x]$. Due to the definition of $\sigma^*$ and the fact that $\psi[y\slash x]$ is admissible, $\M, t, \sigma^*[x\mapsto y] \vDash \psi$ for all $t\sim s$. Thus $\M,s, \sigma^*[x\mapsto y]\vDash \Box\psi$, it follows that $\M, s, \sigma^* \vDash \KS \psi$. 

Suppose $\KS\psi\not\in s$, by $\AxKtoMS$, $\K\psi[y\slash x]\not\in s$ for each $y$ such that $\psi[y\slash x]$ is admissible. By IH, 
$\M^c, s, \sigma^*\vDash \neg\K\psi[y\slash x]$ for each $y$ such that $\psi[y\slash x]$ is admissible. Due to the special assignment $\sigma^*$ such that $\sigma^*(y)=y$, it is  clear that $\M^c, s, \sigma^*[x\mapsto y]\vDash \neg\K\psi$ for each $y$ such that $\psi[y\slash x]$ is admissible. 

Now consider any $y'$ such that $\psi[y'\slash x]$ is not admissible, then by $\AxRMS$, we can reletter the modalities of $y'$ in $\psi$ with some fresh variable to obtain $\psi'$ such that $\vdash\psi\lra\psi'$ and $\psi'[y'\slash x]$ is admissible. Now by $\RE$, $\KS\psi'\not\in s$. We can then repeat the reasoning above to obtain $\M^c, s, \sigma^*[x\mapsto y']\vDash \neg\K\psi'$ for this $y'$. Since $\vdash\psi\lra\psi'$, by soundness, $\M^c, s, \sigma^*[x\mapsto y']\vDash \neg\K\psi$. Therefore for each $y$, no matter whether $\psi[y\slash x]$ is admissible,  we have $\M^c, s, \sigma^*[x\mapsto y]\vDash \neg\K\psi$.
Therefore $\M^c, s, \sigma^* \vDash \neg \KS\psi$.  
\end{itemize}  
\end{proof}

Note that not every $\SMLMSK^+$-consistent set of formulas can be extended into a world in $\M^c$, e.g., $\{\KS\phi(x)\}\cup\{\neg \K\phi(x)\mid x\in \Var^+\}$ cannot be extended consistently to obtain the $\exists$-property. However, every $\SMLMSK$-consistent set can be extended into a world in $W^c$ by adding the witness one by one using the new variables (cf. \cite{Cresswell96}). 
\begin{lemma}\label{lem.mcs}
Every $\SMLMSK$-consistent set of $\MLMSK$-formulas can be extended into an $\SMLMSK^+$-maximal consistent set of $\MLMSK^+$-formulas with $\exists$-property. 
\end{lemma}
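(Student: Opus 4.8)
The plan is to carry out a Henkin-style Lindenbaum construction: maximalise the given set while simultaneously supplying a witness formula for every $\KS$-formula of the extended language, drawing witnesses from the fresh variables in $\Var^+$. First I would record a preliminary conservativity observation: a $\SMLMSK$-consistent set of $\MLMSK$-formulas is also $\SMLMSK^+$-consistent, since any $\SMLMSK^+$-derivation of a contradiction from such a set mentions only finitely many variables, and the axiom schemata and rules are closed under injective variable renaming; renaming the new variables back to old ones would turn it into a $\SMLMSK$-derivation of a contradiction. Hence the given set is a legitimate starting point in the larger language.

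The key step is a fresh-witness lemma: if $\Delta$ is $\SMLMSK^+$-consistent and $y\in\Var^+$ occurs neither in $\Delta$ nor in $\phi$, then $\Delta\cup\{\KS\phi\to\K\phi[y\slash x]\}$ is still consistent. Note that freshness of $y$ makes $\phi[y\slash x]$ automatically admissible, so the witness formula is well-formed in the required sense. For consistency I would argue by contradiction: were the extension inconsistent, then $\vdash\delta\to(\KS\phi\land\neg\K\phi[y\slash x])$ for some finite conjunction $\delta$ of members of $\Delta$, yielding both $\vdash\delta\to\KS\phi$ and $\vdash\K\phi[y\slash x]\to\neg\delta$. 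Since $y$ is fresh it does not occur free in $\neg\delta$, so the rule $\KtoMS$, applied with the mention-some variable $y$, gives $\vdash\K^y\phi[y\slash x]\to\neg\delta$; by $\AxRMS$ we have $\vdash\KS\phi\lra\K^y\phi[y\slash x]$ because $y$ does not occur in $\phi$, hence $\vdash\KS\phi\to\neg\delta$ by $\RE$. Chaining this with $\vdash\delta\to\KS\phi$ gives $\vdash\neg\delta$, contradicting the consistency of $\Delta$. This is essentially the witnessing computation already carried out inside Lemma~\ref{lem.ex}, but simpler, because freshness removes any need for the $\exists$-property or Proposition~\ref{prop.kimp}.

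With the lemma in hand the construction is routine. I would fix an enumeration $\psi_0,\psi_1,\dots$ of all $\MLMSK^+$-formulas and build an increasing chain $\Gamma_0\subseteq\Gamma_1\subseteq\cdots$ of consistent sets from the given $\Gamma_0$. At stage $n+1$ I first decide $\psi_n$ in the usual Lindenbaum manner (add $\psi_n$ if this keeps the set consistent, otherwise add $\neg\psi_n$); then, if $\psi_n$ has the form $\KS\phi$, I additionally add the witness $\KS\phi\to\K\phi[y\slash x]$ for some $y\in\Var^+$ occurring neither in the finite set built so far nor in $\phi$. Such a $y$ exists since only finitely many variables are used at any finite stage while $\Var^+$ is infinite, and the fresh-witness lemma guarantees each addition preserves consistency. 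Setting $\Gamma=\bigcup_n\Gamma_n$, consistency is inherited from the chain by the finitary character of derivability, maximality holds because every formula was decided, and the $\exists$-property holds because every $\KS$-formula of $\MLMSK^+$ occurs somewhere in the enumeration and was thereby supplied with a witness.

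The main obstacle, and the only place genuine work is needed, is the fresh-witness lemma — specifically routing the derivation through $\KtoMS$ and the renaming axiom $\AxRMS$ so that the side condition $x\notin FV$, transferred to the fresh variable $y$, is met; the rest is bookkeeping. A secondary point to keep explicit is that witnesses must be provided for \emph{all} $\KS$-formulas of the language, including those built from the new variables introduced by earlier witnesses, which is exactly why the enumeration ranges over all of $\MLMSK^+$ and not merely over the formulas of the initial set.
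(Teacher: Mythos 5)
Your proof is correct and is exactly the construction the paper invokes for this lemma (the paper gives no proof, only a pointer to the Cresswell-style method of ``adding the witness one by one using the new variables''): a Lindenbaum chain over an enumeration of all $\MLMSK^+$-formulas interleaved with fresh-variable witnesses, whose consistency preservation is secured by routing through $\KtoMS$ and $\AxRMS$, which is indeed simpler than the argument of Lemma~\ref{lem.ex} precisely because freshness is available. One wording slip worth fixing: at stage $n+1$ the set built so far is not finite (it contains the possibly infinite initial set $\Gamma_0$), but your argument survives because $\Gamma_0$ uses only variables of the original language, so a variable of $\Var^+\setminus\Var$ avoiding the finitely many new variables consumed by earlier witnesses is fresh for the entire set, as the fresh-witness lemma requires.
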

Now based on Lemma~\ref{lem.mcs} and Lemma~\ref{lem.truth}, every $\SMLMSK$-consistent set is satisfiable by some pointed model and an assignment. The completeness follows: 
\begin{theorem}
$\SMLMSK$ is strongly complete w.r.t.\ $\MLMSK$ over S5 models.  
\end{theorem}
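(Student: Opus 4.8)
The plan is to derive strong completeness of $\SMLMSK$ from the canonical-model machinery already assembled, following the standard Henkin-style argument adapted to this mention-some setting. Suppose $\Phi$ is an $\SMLMSK$-consistent set of $\MLMSK$-formulas. First I would invoke Lemma~\ref{lem.mcs} to extend $\Phi$ into an $\SMLMSK^+$-maximal consistent set $s$ with the $\exists$-property; such an $s$ is by construction a world of the canonical model $\M^c$ from Definition~\ref{def.cm}. Then I would apply the Truth Lemma (Lemma~\ref{lem.truth}) with the canonical assignment $\sigma^*$, which gives $\M^c, s, \sigma^* \vDash \phi$ for every $\phi \in s$, and in particular for every $\phi \in \Phi \subseteq s$. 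Hence $\Phi$ is satisfiable at $(s,\sigma^*)$ in $\M^c$.

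The one genuine gap to close is that satisfiability must be witnessed by a legitimate \emph{S5-model}: I must verify that $\M^c$, with $\sim^c$ as its accessibility relation, actually meets the frame conditions imposed in Section~\ref{sec.el}. The text already remarks that $\sim^c$ is an equivalence relation (using $\AxTrK$, $\AxTransK$, $\AxEucK$), so reflexivity, symmetry and transitivity are in hand. The remaining point is the constant-domain requirement: since $D^c = \Var^+$ is a single fixed set shared by all worlds and $\rho^c$ is defined uniformly, every local domain is $\Var^+$, so the model is trivially constant-domain and increasing, and $\sigma^*(x)=x \in D^c$ for every $x$, meeting the assignment restriction. Thus $\M^c$ is a bona fide S5-model and the satisfaction obtained via the Truth Lemma is satisfaction in the intended class.

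I would therefore organize the proof as: (i) take an arbitrary consistent $\Phi$; (ii) extend it by Lemma~\ref{lem.mcs} to a world $s$ of $\M^c$; (iii) confirm $\M^c$ is an S5-model (equivalence relation plus constant domain, as above); (iv) apply the Truth Lemma to conclude $\M^c, s, \sigma^* \vDash \Phi$; (v) conclude that every consistent set is satisfiable, which is exactly strong completeness in its contrapositive form (if $\Phi \vDash \phi$ then $\Phi \cup \{\neg\phi\}$ is unsatisfiable, hence inconsistent, hence $\Phi \vdash \phi$).

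The step I expect to carry the real weight is not in this final theorem at all but is packaged in the lemmas I am permitted to cite: the delicate $\exists$-property construction underpinning Lemma~\ref{lem.ex} and Lemma~\ref{lem.mcs}, where witnesses $\KS\phi \to \K\phi[y/x]$ must be added consistently using fresh variables from $\Var^+$, relying crucially on Proposition~\ref{prop.kimp} and the admissibility side-conditions on substitution. Given those lemmas, the final assembly is essentially routine bookkeeping; the only care needed here is to handle the substitution/admissibility bookkeeping in the $\KS$ clause of the Truth Lemma correctly, which the cited lemma already discharges by relettering bound variables via $\AxRMS$ and $\RE$ whenever $\psi[y/x]$ fails to be admissible.
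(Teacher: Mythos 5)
Your proposal is correct and follows essentially the same route as the paper: the paper's own proof is precisely the one-line assembly of Lemma~\ref{lem.mcs} (extension of a consistent set to a world of $\M^c$ with the $\exists$-property) and the Truth Lemma~\ref{lem.truth}, with the verification that $\sim^c$ is an equivalence relation (via $\AxTrK$, $\AxTransK$, $\AxEucK$) and that $D^c=\Var^+$ gives a constant domain noted as routine immediately after Definition~\ref{def.cm}. Your additional bookkeeping (the S5-frame check, the assignment restriction $\sigma^*(x)\in D^c$, and the contrapositive formulation of strong completeness) matches what the paper leaves implicit.
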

The completeness of $\SMLMSKEQ$ is quite routine based on the completeness proof of $\SMLMSK$. We only sketch the idea here following \cite{Cresswell96}. 
\begin{theorem}
$\SMLMSKEQ$ is strongly complete w.r.t.\ $\MLMSKEQ$ over S5 models.  
\end{theorem}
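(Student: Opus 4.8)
The plan is to reuse the canonical model machinery for $\SMLMSK$ almost verbatim, the only genuinely new ingredient being that the domain must be quotiented by provable equality. First I would extend $\MLMSKEQ$ with countably many fresh variables to form $\MLMSKEQ^+$, define $\SMLMSKEQ^+$-consistency and the $\exists$-property exactly as in the equality-free case, and prove the analogue of Lemma~\ref{lem.mcs}: every $\SMLMSKEQ$-consistent set of $\MLMSKEQ$-formulas extends to an $\SMLMSKEQ^+$-maximal consistent set with the $\exists$-property. The witness-introduction argument transfers without change, since it only invokes Proposition~\ref{prop.kimp} together with the $\K$-axioms and is untouched by $\AxId$ and $\AxSub$.

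The new work concerns $\approx$. For $s\in W^c$ set $x\approx_s y$ iff $(x\approx y)\in s$; by the derivable $\AxSym$ and $\AxTranseq$ together with $\AxId$ this is an equivalence relation on $\Var^+$. The crucial point is that $\AxKEQ$ and $\AxKNEQ$ make equality \emph{rigid}: whenever $s\sim^c t$ one has $x\approx_s y\iff x\approx_t y$, so $\approx_s$ is constant along each $\sim^c$-class. Since completeness only requires satisfying a single consistent set, I would work inside the generated submodel (the $\sim^c$-class) of the target maximal consistent set, where equality is uniform, and there take the domain to be $D^c=\Var^+/{\approx}$, the assignment $\sigma^*(x)=[x]$, and $([x_1],\dots,[x_n])\in\rho^c(P,s)$ iff $Px_1\cdots x_n\in s$. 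Well-definedness of $\rho^c$ on representatives is precisely $\AxSub$.

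With these definitions the existence lemma (Lemma~\ref{lem.ex}) carries over unchanged, and the truth lemma is proved by the same induction as Lemma~\ref{lem.truth}, with two points to check. For the new atomic clause, $\M^c,s,\sigma^*\vDash x\approx y$ iff $[x]=[y]$ iff $x\approx_s y$ iff $(x\approx y)\in s$, immediately from the quotient. The $\KS\psi$ clause proceeds as before: the $\exists$-property yields a witness variable $y$ with $\K\psi[y\slash x]\in s$, and its class $[y]\in D^c$ serves as the existential witness, the admissible substitution $\psi[y\slash x]$ meshing with the quotient because $\sigma^*[x\mapsto[y]]$ agrees with $\sigma^*$ on $y$. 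Strong completeness then follows exactly as in the equality-free case, by satisfying an arbitrary consistent set at the point of its maximal-consistent extension.

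The step I expect to be the main obstacle is the rigidity argument underpinning the quotient: one must verify that $\approx_s$ is genuinely invariant along $\sim^c$, so that the quotient domain, the interpretation $\rho^c$, and hence the whole generated submodel are coherent. This is exactly the service performed by $\AxKEQ$ and $\AxKNEQ$, and it is the only place where the equality axioms contribute beyond the base atomic clause of the truth lemma.
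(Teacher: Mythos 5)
Your proposal is correct and follows essentially the same route as the paper, which gives no detailed argument but explicitly defers to the standard treatment of equality in \cite{Cresswell96}: reuse the $\SMLMSK$ canonical model, quotient $\Var^+$ by provable equality (well-defined on each $\sim^c$-class thanks to the rigidity supplied by $\AxKEQ$/$\AxKNEQ$, with $\AxSub$ giving well-definedness of $\rho^c$), and rerun the truth lemma. Your identification of rigidity as the one place where the equality axioms do real work beyond the atomic clause is exactly the point the paper's sketch leaves implicit.
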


Unfortunately, $\MLMSK$ over (constant-domain) S5 models is indeed too powerful, we can code first-order formulas by replacing each quantifier in a first-order formula in the prenex form by $\KS$ or $\hKS\Box$ respectively. We can show that this translation preserves the satisfiability.  For example, we can translate $\exists x \forall y \phi$ ($\phi$ is quantifier-free) into an $\MLMSK$ formula $\KS \hKSy\Box \phi$, which is equivalent to the first-order modal formula $\exists x \Box\forall y \Box \phi$ over S5 models, and it implies $\exists x\forall y \phi$ by reflexivity. If a first-order formula is satisfiable then we can build a single-world S5 model such that the translated $\MLMSK$-formula is also satisfiable. On the other hand, if the translated $\MLMSK$-formula is satisfiable in some pointed $\MLMSK$ S5 model, then we can just pick the designated world in that model as a first-order structure to satisfy the original first-order formula. This leads to the  undecidability of $\MLMSK$ due to the undecidability of first-order logic:
\begin{theorem}
$\MLMSK$ is undecidable over S5 models. 
\end{theorem}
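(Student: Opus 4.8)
The plan is to reduce the (undecidable) satisfiability problem of relational, equality-free first-order logic to the satisfiability problem of $\MLMSK$ over S5-models; undecidability of validity then follows at once, since $\phi$ is valid iff $\neg\phi$ is unsatisfiable. Given a first-order sentence I would first put it, effectively, into prenex normal form $\alpha=Q_1x_1\cdots Q_nx_n\,\phi$ with $\phi$ quantifier-free, and then define a computable translation $tr$ into $\MLMSK$ by recursion on the quantifier prefix: $tr$ is the identity on the quantifier-free matrix (which is already an $\MLMSK$-formula), $tr(\exists x\,\beta)=\KS tr(\beta)$, and $tr(\forall y\,\beta)=\hKSy\K\, tr(\beta)$. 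If the resulting $tr(\alpha)$ is not clean, I would reletter its bound variables via $\AxRMS$-style renaming, which preserves truth.

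The core is to show that $\alpha$ is first-order satisfiable iff $tr(\alpha)$ is satisfiable in some pointed S5-model. For the forward direction, from a first-order structure $\mathcal{A}\vDash\alpha$ I would build the one-world S5-model whose constant domain is the domain of $\mathcal{A}$ and whose $\rho(P,w)$ copies the interpretation of $P$ in $\mathcal{A}$; a single reflexive point is trivially an equivalence relation. In such a model every $\K$ and every $\hK$ is vacuous, so $\KS\psi$ collapses to $\exists x\,\psi$ and $\hKSy\K\psi$ collapses to $\forall y\,\psi$, whence $tr(\alpha)$ evaluates at $w$ exactly as $\alpha$ does in $\mathcal{A}$, giving $\M,w\vDash tr(\alpha)$.

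For the converse, suppose $\M,w\vDash tr(\alpha)$ for an arbitrary, possibly multi-world, S5-model. Reading $tr(\alpha)$ as an $\FOML$ formula and using the S5-validities $\hK\K\psi\lra\K\psi$ and $\K\psi\to\psi$, I would first rewrite it into the equivalent $Q_1x_1\K Q_2x_2\K\cdots Q_nx_n\K\,\phi$, in which every quantifier is immediately followed by a $\K$; then, peeling the boxes one layer at a time by reflexivity (each step uses only $\K\psi\to\psi$ together with the monotonicity of $\exists$ and $\forall$), I obtain $\M,w\vDash Q_1x_1\cdots Q_nx_n\,\phi$. As this last formula is modality-free, its truth at $w$ is precisely satisfaction of $\alpha$ in the first-order structure $\mathcal{A}_w$ read off from $D$ and $\rho(\cdot,w)$, so $\alpha$ is first-order satisfiable.

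The two directions together give the reduction, and since $tr$ is computable while first-order satisfiability over a relational signature is undecidable, $\MLMSK$-satisfiability (hence validity) over S5-models is undecidable. I expect the main obstacle to be the converse direction: one must ensure that a genuinely multi-world S5-model of $tr(\alpha)$ still collapses correctly to a first-order model, and the clean way to guarantee this is to route the argument through the $\FOML$ reading of $tr(\alpha)$ and the two S5-validities above rather than arguing directly on the mention-some semantics. A secondary point worth isolating is that the source fragment is already undecidable \emph{without} equality, so that the equality-free character of $\MLMSK$ poses no obstruction.
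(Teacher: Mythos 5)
Your proposal is correct and is essentially the paper's own proof: the paper likewise translates prenex first-order formulas by replacing $\exists x$ with $\KS$ and $\forall y$ with $\hKSy\K$, establishes one direction by satisfying the translation in a single-world (reflexive) S5 model, and establishes the converse by reading the designated world off as a first-order structure, using reflexivity to discharge the boxes. The only difference is presentational — you spell out the converse direction (via the S5 equivalence $\hK\K\psi\lra\K\psi$ and $\K\psi\to\psi$) in more detail than the paper's sketch.
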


\section{Future work}

Due to limited space, we omit several proofs and some additional results, and leave them to the extended version of this conference paper.
We believe that this is only the beginning of an interesting story. On the technical side, we may study potential properties of $\MLMSEQ$ such as interpolation, frame definability, characterization over finite models, axiomatization and (un)decidability on various frame classes (with or without extra constant symbols). Although the full $\MLMSK$ over S5 models is undecidable, the concrete ``propositional'' know-wh logics mentioned in the introduction are usually decidable. One explanation is that the existing know-wh logics are often similar to one-variable fragments of the first-order modal language, which may lead to decidability over S5 and other models \cite{DeFOML}, e.g., the conditional  \textit{knowing value} formulas  $\Kv(\phi,c)$ discussed in \cite{WF13,WF14} are essentially $\exists x \Box(\phi\to c=x)$. A detailed comparison with known guarded-like fragments with extra frame constraints will be very useful to understand the new framework more deeply. Moreover, we believe our techniques and results can be generalized to poly-modal (multi-agent) settings. It then makes sense to discuss the mention-some version of common knowledge operator $C^x:=\exists x C$ where $C$ is the propositional common knowledge operator.\footnote{This is a rather strong notion of common knowledge, e.g., we commonly know how to prove the theorem in this sense means a fixed proof is also commonly known.} There is also a clear similarity with modal logic over neighbourhood models, which is worth exploring.

\bibliographystyle{eptcs}
\bibliography{BKTALL}

\end{document}